\documentclass{article}

\usepackage{fullpage}
\usepackage{mathtools}
\usepackage{amssymb,amsthm,bm}
\usepackage{derivative}
\usepackage{forest}
\usepackage{mleftright}
\usepackage{booktabs}
\usepackage{zref-clever}
\usepackage{algorithm,algpseudocodex}
\usepackage{float}
\usepackage{xcolor}
\usepackage{xspace}
\usepackage{authblk}
\usepackage{enumitem}
\usepackage{hyperref}
\usepackage[symbol]{footmisc}
\usepackage[natbib=true]{biblatex}
\usepackage[activate={true,nocompatibility},final,tracking=true,kerning=true,spacing=true,factor=1100,stretch=10,shrink=10]{microtype}
\usepackage{comment}

\theoremstyle{plain}

\newtheorem{lemma}{Lemma}
\newtheorem{proposition}{Proposition}
\newtheorem{corollary}{Corollary}

\theoremstyle{remark}
\newtheorem*{remark}{Remark}

\zcsetup{cap}

\definecolor{tab10blue}{RGB}{31, 119, 180} 
\definecolor{tab10orange}{RGB}{255, 127, 14}

\AtEveryBibitem{\iffieldundef{doi}{}{\clearfield{url}}}
\AtEveryBibitem{\iffieldundef{eprinttype}{}{\clearfield{url}}}

\DeclarePairedDelimiter\abs{\lvert}{\rvert}
\DeclarePairedDelimiter\norm{\lVert}{\rVert}
\DeclarePairedDelimiter\inner{\langle}{\rangle}

\DeclareMathOperator{\Tr}{Tr}

\DeclareMathOperator{\Proj}{\Pi}
\DeclareMathOperator{\op}{op}
\DeclareMathOperator{\Span}{span}

\newcommand{\GEM}{\textls[-50]{\textsc{GEM-KMeans}}\xspace}
\newcommand{\FlKM}{\textls[-50]{\textsc{Flash-KMeans}}\xspace}
\newcommand{\FPK}{\textls[-50]{\textsc{Fast-PyTorch-KMeans}}\xspace}
\newcommand{\FaKM}{\textls[-50]{\textsc{FastKMeans}}\xspace}

\DeclareMathOperator{\Sgemm}{\mathtt{Sgemm}}
\DeclareMathOperator{\Saxpy}{\mathtt{Saxpy}}
\DeclareMathOperator{\Sgemv}{\mathtt{Sgemv}}
\DeclareMathOperator{\SnrmTwo}{\mathtt{Snrm2}}
\DeclareMathOperator{\Sscal}{\mathtt{Sscal}}

\newcommand{\bOne}{\bm{1}}
\newcommand{\bZero}{\bm{0}}
\newcommand{\sfL}{\mathsf{L}}
\newcommand{\bbR}{\mathbb{R}}

\newcommand{\flagN}{\textsc{n}}
\newcommand{\flagT}{\textsc{t}}
\title{GEM-KMeans: Memory-Efficient and Accurate Clustering on Massive Scale with GPU Optimization}
\author[1]{Peng Xu$^*$}
\author[2]{Nihar Koganti$^*$}
\author[3]{Volodymyr Kindratenko}
\author[4,5,6]{Xiaohui Chen$^\dagger$}
\affil[1]{Department of Statistics, University of Illinois Urbana-Champaign \authorcr\texttt{pengxu1@illinois.edu}}
\affil[2]{Department of Electrical and Computer Engineering, University of Illinois Urbana-Champaign \authorcr\texttt{nihark2@illinois.edu}}
\affil[3]{National Center for Supercomputing Applications, University of Illinois Urbana-Champaign \authorcr\texttt{kindrtnk@illinois.edu}}
\affil[4]{Department of Mathematics, University of Southern California}
\affil[5]{Thomas Lord Department of Computer Science, University of Southern California \authorcr\texttt{xiaohuic@usc.edu}}
\affil[6]{Computing and Mathematical Sciences, California Institute of Technology \authorcr\texttt{xiaohuic@caltech.edu}}

\begin{document}

\maketitle

\footnotetext[1]{These authors contributed equally.}
\footnotetext[2]{Research partially supported by NSF DMS-2413404 and an unrestricted gift from the Simons Foundation.}

\begin{abstract}
Memory-efficient scaling on clustering problems without sacrificing statistical accuracy is of central interest for large-scale data analysis and machine learning problems. Nonnegative low-rank (NLR) matrix factorization for $K$-means is a scalable clustering method, which connects to semidefinite relaxations with optimal average-case exact recovery guarantees. However, a direct GPU implementation of NLR requires multiple large factor-sized buffers and substantial data movements that are essentially memory-bound. In this paper, we introduce \GEM, a spectrally normalized yet mathematically equivalent NLR formulation that fuses the gradient update, nonnegative projection, and sufficient statistics for normalization and iterate movement into a matrix-multiplication epilogue. Instead of retaining three massive factor-sized arrays, our IO-aware GPU implementation materializes only one single factor with small tile-reduction arrays as additional storage in the High Bandwidth Memory (HBM). We derive explicit memory costs and spectrally normalized smoothness bounds for optimizing the clustering objective function. Accurate clustering is demonstrated at massive scales on synthetic and real datasets, where performance gains of \GEM over existing GPU-accelerated Lloyd's algorithms involve data-dependent runtime tradeoffs.
\end{abstract}

\section{Introduction}
\label{sec:intro}

Clustering is a core technique in modern machine learning, underpinning a broad range of applications across data analysis, representation learning, and large-scale AI systems. For example, semantic deduplication groups related training examples before selecting data~\citep{abbas2023semdedup}; clustering groups sources with similar observed properties in massive astronomical catalogs~\citep{hunt2023openclusters}; retrieval systems organize representations around
centroids~\citep{Santhanam2022PLAID}; and clustering allows selective cache
access and sparse attention for large language model inference~\citep{Liu2025ClusterKV,zhu2025tacticadaptivesparseattention}. The $K$-means method is arguably one of the most widely used clustering methods, partially due to the fast Lloyd's algorithm iterating between nearest-centroid assignment and mean updates~\citep{macqueen1967multivariate,lloyd1982least}. Despite being scalable with highly efficient GPU implementations~\citep{johnson2021billion,yang2026flash}, this heuristic approach can be sensitive to initialization and it only enjoys suitable theoretical guarantees in recovering cluster labels when carefully initialized~\citep {ZhuangChenYang2022LASDP,qianzhangchen2022KMeans,lu2016statistical}.

Since the exact solution of the $K$-means problem, as a combinatorial optimization problem, is computationally intractable in general~\citep{dasgupta2008hardness,aloise2009np}, there are many other relaxed $K$-means formulations, including spectral clustering~\citep{von2007tutorial,ng2001spectral}, nonnegative matrix factorization (NMF)~\citep{he2011symmetric,kuang2015symnmf,wang2012nonnegative}, (convex) semidefinite programming (SDP)~\citep{peng2007approximating,mixon2017clustering,giraud2019partial}, and nonconvex low-rank SDPs~\citep{zhuang2024statistically,xu2025scalablesecondorderriemannianoptimization,CarsonMixonVillarWard_manifold-Kmeans}. Nevertheless, unlike Lloyd's algorithm, it is extremely challenging to adopt those $K$-means variants in massive-scale problems with hundreds of millions of data points, due to hardware (e.g., memory and data movement) or algorithmic (e.g., high computational complexity) constraints. This raises the following natural tradeoff question between statistical accuracy and computational efficiency:
\begin{quote}
    \emph{Can we develop a statistically accurate, or even optimal, $K$-means implementation that is hardware efficient and scalable to a massive number of data points?}
\end{quote}

In this paper, we introduce the \GEM (\emph{GEneral Matrix multiplication $K$-Means}) method, a \textbf{GPU-native}, \textbf{highly scalable}, \emph{and} \textbf{statistically exact} $K$-means approach based on the nonnegative low-rank (NLR) algorithm, originally introduced by~\citep{zhuang2024statistically}. Unlike existing GPU-accelerated Lloyd's algorithm with min-assignment and centroid update kernels \citep{yang2026flash,johnson2021billion,faiss2026clustering,rapids2026cuml}, our implementation is gradient-based that is naturally compatible with modern GPU hardware design and especially suitable for dense computes via Tensor Core kernels. With fused gradient kernels based on low-level NVIDIA libraries such as cuBLAS~\citep{nvidia2026cublas} and CUTLASS GEMM~\citep{nvidia2026cutlass}, we demonstrate that \GEM, on a single H200 NVIDIA GPU, can effectively scale to hundreds of millions of data points, outperforming state-of-the-art GPU implementations of Lloyd's algorithm on comparable scales in terms of clustering accuracy.

\subsection{Our Contributions}

Our contributions center on reducing the memory footprint to execute the full-data NLR formulation while retaining its statistical structure and demonstrating clustering accuracy at massive scale.

\begin{enumerate}[leftmargin=1.5em,labelsep=0.5em]
    \item \textbf{Memory-efficient GPU execution through fused updates and statistics.}
    We develop new customized cuBLAS/CUTLASS kernels tailored to fuse (dense) gradient updates, nonnegative projection, and accumulation of normalization and primal-movement statistics into a GEMM epilogue (\zcref{sec:gpu}). By collecting three sufficient statistics before overwriting the factor, \GEM eliminates both the objective-gradient buffer and the previous-factor copy. This reduces factor-sized storage from three $n\times r$ arrays to one, saving $2nr$ FP32 entries ($8nr$ bytes), with small tile-reduction arrays as additional storage. We give explicit buffer accounting and retain $O(ndr)$ arithmetic per iteration without subsampling the data or weakening the NLR constraints.

    \item \textbf{Normalized optimization with explicit smoothness and descent guarantees.}
    We combine spectral data normalization with a scaled constraint residual to control the gradient scale (\zcref{sec:analysis}). For a fixed dual variable, we prove the convergence of projected gradient descent for an appropriately fixed step size.
    \item \textbf{Accuracy at hundred-million-sample scale.}
    On a single NVIDIA H200 GPU, we evaluate balanced Gaussian mixtures, CyTOF cell profiles, and 118 million Gaia DR3 sources. \GEM attains near-perfect Adjusted Rand Index (ARI) in most synthetic runs (\zcref{fig:gmm}). On 104184 CyTOF cells, its median Normalized Mutual Information (NMI) is approximately $0.82$, versus $0.79$ for the three baselines (\zcref{fig:cytof_full}). On Gaia, its mean NMI is $0.315$, compared with $0.304$ for \FlKM and $0.305$ for \FPK (\zcref{fig:gaia}).
\end{enumerate}

To our knowledge, \GEM is the first $K$-means clustering solver that scales to \textbf{hundreds of millions} points with \textbf{guaranteed statistical accuracy} in the exact recovery regime under standard statistical sampling models such as Gaussian mixtures.

\subsection{Related Work}
Existing GPU accelerations for $K$-means are exclusively based on Llyod's algorithm. Using the FlashAttention mechanism~\citep{FlashAttention2022}, \FlKM reduces data movement for the Lloyd assignment and centroid update operations \citep{yang2026flash}. FAISS and cuML provide additional practical GPU baselines
\citep{johnson2021billion,faiss2026clustering,rapids2026cuml}.
$K$-means++ and scalable $K$-means++ improve initialization, while mini-batch $K$-means trades full data updates for smaller batches
\citep{arthur2007kmeanspp,bahmani2012scalable,sculley2010webscale}.

Up to a data spectral normalization, our \GEM formulation is mathematically equivalent to the NLR method of \citet{zhuang2024statistically}, whose unfused CPU implementation is limited to the order of a few tens of thousands due to memory constraints. The second-order Riemannian method of \citet{xu2025scalablesecondorderriemannianoptimization} addresses the same NLR problem with a smooth log-barrier penalty and a different optimization geometry. Our GPU implementation shares the same general principle as FlashAttention and \FlKM by consuming a matrix-product tile in its epilogue based on the fusion mechanism of CUTLASS~\citep{nvidia2026cutlass}. However, our computation has an extra projection operation which still requires a global norm reduction.

\section{\texorpdfstring{$K$}{K}-means from Partition to Nonnegative Factorization}
Let $X\in\bbR^{n\times d}$ contain $d$-dimensional observations $x_1, \dots, x_n$ as rows, and let $K$ be the true number of nonempty clusters. For any given partition $\mathcal G=(G_1,\dotsc,G_K)$ of the data index $[n]\coloneqq\{1,\dots,n\}$,
the total intra-cluster sum of squares is
\begin{equation}
 J(\mathcal G)=\sum_{k=1}^K\sum_{i\in G_k}\norm{x_i-\bar x_k}_2^2
 =\norm{X}_F^2-\inner{XX^\top, Z_{\mathcal G}},\qquad
 Z_{\mathcal G}=HH^\top,
 \label{eq:partition}
\end{equation}
where $\bar x_k$ is the $k$-th cluster mean and $H_{n \times K}$ is the associated assignment matrix with entries
$H_{ik}=\bOne\{i\in G_k\}/\sqrt{\abs{G_k}}$. Then, the partition-based formulation of the $K$-means clustering is to solve
\begin{equation}
    \label{eqn:Kmeans_partition}
    \min_{\mathcal G} J(\mathcal G)
\end{equation}
over all possible partitions $\mathcal G$ of $[n]$. Note that the $n \times n$ membership matrix $Z_{\mathcal G}$ is positive-semidefinite, entrywise nonnegative, has trace $K$, and satisfies
$Z_{\mathcal G}\bOne_n=\bOne_n$. Relaxing the partition structure that $H$ is an $n \times K$ ``binary'' assignment matrix yields the
Peng--Wei SDP \citep{peng2007approximating}:
\begin{equation}
 \min_{Z \in \bbR^{n\times n}} \mleft\{\inner{A, Z}:Z\succeq0,\; Z\geq0,\;\Tr(Z)=K,\; Z\bOne_n=\bOne_n \mright\},
 \label{eq:sdp}
\end{equation}
where $A=-XX^\top$ and $Z\geq0$ denotes entrywise nonnegativity of the matrix $Z$.

In general, the SDP relaxation~(\ref{eq:sdp}) to a continuous optimization problem may lose the integrality nature of cluster labels~\citep{FeiChen2022HiddenIntegrality}. Nevertheless, a well-known information-theoretical optimality result for the Peng--Wei SDP states that the unique SDP solution~(\ref{eq:sdp}) achieves the \emph{exact recovery} of cluster labels, as soon as this becomes possible under the standard Gaussian mixture model (GMM)~\citep{chenyang2021threshold}. Specifically, consider the standard GMM $x_i=\mu_{z_i^*}+\varepsilon_i$ with i.i.d. sampling noises
$\varepsilon_i\sim N(0,\sigma^2I_d)$, equal cluster sizes, and the minimal centroid separation
$\Delta=\min_{k\ne\ell \in [K]}\norm{\mu_k-\mu_\ell}_2$. Here, $z_i^* = k$ if $i \in G_k^*$ means that the data point $x_i$ belongs to the true $k$-th cluster $G_k^*$. Then, there is a phase transition on the \emph{centroid-to-noise ratio} (CNR), with the sharp threshold given by
\begin{equation}
    \label{eqn:exact_recovery_threshold}
    \overline{\text{CNR}}^2\coloneqq 4 \left( 1+ \sqrt{1+{\frac{Kd}{n \log{n}}}} \right) \log{n},
\end{equation}
in the sense that: for any $\alpha > 0$ and $K = O(\log{n} / \log\log{n})$, (i) when the minimal centroid-separation $\Delta \geq (1+\alpha) \sigma \cdot \overline{\text{CNR}}$, the SDP solution of~(\ref{eq:sdp}) has a block diagonal structure (after data rearrangement), which perfectly recovers the true cluster structure ${\mathcal G}^* = (G_1^*, \dotsc, G_K^*)$ with high probability; (ii) when $\Delta \leq (1-\alpha) \sigma \cdot \overline{\text{CNR}}$, then the maximum likelihood training, and therefore any other method regardless of computational cost, fails to exactly recover the true cluster structure.

Despite the appealing statistical optimality and robust empirical performance, SDP is often criticized for having poor scalability. Even for a small dataset with sample size $n = 1000$, the SDP requires $n^2 = 10^6$ optimization variables that are at the very boundary of state-of-the-art general-purpose SDP solvers such as SDPNAL+~\citep{yang2015sdpnal+}. In the clustering problem when the SDP solution is expected to be low-rank, nonnegative low-rank (NLR) matrix factorization is a computationally thrifty nonconvex approach by reparametrizing $Z = U U^T$, where $U \geq 0$ is an $n \times r$ nonnegative matrix factor with a rank parameter $r \geq K$~\citep{burer2003nonlinear}. This substantially reduces the number of variables from $n^2$ to $n r$ at the cost of forfeiting the SDP convexity. Specifically, for the $K$-means SDP~(\ref{eq:sdp}), ~\cite{zhuang2024statistically} proposed to solve the following nonconvex NLR problem:
\begin{equation}
    \label{eqn:Kmeans_NLR}
    \min_{U \in \Omega} \mleft\{ \inner{A, U U^\top}: g(U) = \bZero_n \mright\},
\end{equation}
where $\Omega=\{U\in\bbR^{n\times r}:U\geq0,\norm{U}_F^2=K\}$ and $g(U) = U U^T \bOne_n -\bOne_n$. Note that problem (\ref{eqn:Kmeans_NLR}) is a tighter NMF-style formulation than the SDP (\ref{eq:sdp}) because it imposes the entrywise nonnegative constraint $U \geq 0$ (rather than $U U^T \geq 0$) that is easier to enforce~\citep{kulis2007fast}. Tailoring the standard primal-dual algorithm (cf.~\zcref{alg:vanilla_NLR} in Appendix) via the augmented Lagrangian method~\citep{NocedalWright2006_NumOpt} to minimize
\begin{equation}
     \label{eq:ALM_primal_dual}
     \min_{U \in \Omega} \mleft\{ f_y(U)\coloneqq\inner{A, UU^\top}+y^\top g(U)+\frac{\beta}{2}\norm{g(U)}_2^2 \mright\},
\end{equation}
it is shown in~\citep{zhuang2024statistically} that with a random initialization, the NLR algorithm enjoys the local linear convergence to a stationary point in the exact recovery regime $\Delta \geq (1+\alpha) \sigma \cdot \overline{\text{CNR}}$.

\section{Scaling Issues: Optimization Stability and Massive Materialization}
\label{sec:analysis}
Scaling NLR to massive datasets presents two related challenges: keeping the gradient well-scaled and avoiding unnecessary transfers of $n\times r$ matrices. For the objective $f_y(U)$, the penalty gradient contains $(\bOne_n \overline{y}^\top + \overline{y} \bOne_n^\top) U$ with $\overline{y} = y + \beta(U U^\top \bOne_n - \bOne_n)$, whose Frobenius norm increases with $n$. This requires the penalty coefficient or primal step size to be adjusted as the sample size changes. To make the step-size adjustment in our optimization more scale-stable with respect to large sample size $n$, we therefore center and spectrally normalize the data:
\[
X_c=X-\bOne_n\bar x^\top,\qquad
a=\norm{X_c}_2>0,\qquad
\hat X=X_c/a,\qquad
\hat A=-\hat X\hat X^\top.
\]
Accordingly, we also normalize the constraint residual and define
\begin{equation}
\hat f_y(U)
=\inner{\hat A,UU^\top}+y^\top h(U)
 +\frac{\beta}{2}\norm{h(U)}_2^2,
\qquad
h(U)=\frac{g(U)}{\sqrt n}.
\label{eq:normalized}
\end{equation}
Here $y$ is the multiplier for $h$, rather than $g$. The resulting gradient is
\begin{equation}
\nabla_U\hat f_y(U)
=2\hat A U
+\bar y_{\mathrm{eff}}(U^\top\bOne_n)^\top
+\bOne_n(U^\top\bar y_{\mathrm{eff}})^\top,
\label{eq:gradient}
\end{equation}
where the effective multiplier and the dual ascent update are
\begin{equation}
\bar y_{\mathrm{eff}}
=\frac{y}{\sqrt n}+\frac{\beta}{n}g(U),
\qquad
y^+=y+\frac{\beta}{\sqrt n}g(U).
\label{eq:dualnormalized}
\end{equation}
The implementation stores $y/\sqrt n$ and uses $\beta/n$ as its penalty coefficient, so its dual buffer is updated by $(\beta/n)g(U)$. All formulas below retain $y$ as the multiplier for $h$. We remark that centering and positive rescaling preserve the optimal clustering partitions, although they can change the path of infeasible iterates.

Our spectral normalization gives a smoothness bound for each fixed value of the dual variable. This bound supplies a sufficient step size for the projected primal update and yields the following descent result.
\begin{proposition}[A global fixed-dual smoothness bound]
\label{prop:smooth}
For fixed $y$ and $\beta>0$, the gradient of (\ref{eq:normalized}) is Lipschitz on $\{U:\norm{U}_F\leq\sqrt K\}$, with the valid bound
\begin{equation}
\sfL=2+2\norm{y}_2+\beta(6K+2).
\label{eq:L}
\end{equation}
\end{proposition}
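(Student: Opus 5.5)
Split the gradient \eqref{eq:gradient} into three pieces and bound the Lipschitz constant of each on the ball $\mathcal B=\{U:\norm{U}_F\le\sqrt K\}$:
\[
\nabla\hat f_y(U)=\underbrace{2\hat AU}_{T_1}+\underbrace{\tfrac{1}{\sqrt n}\bigl(y\bOne_n^\top U+\bOne_n y^\top U\bigr)}_{T_2}+\underbrace{\tfrac{\beta}{n}\bigl(g(U)\bOne_n^\top U+\bOne_n g(U)^\top U\bigr)}_{T_3}.
\]
Here I used $\bar y_{\mathrm{eff}}(U^\top\bOne_n)^\top=\bar y_{\mathrm{eff}}\bOne_n^\top U$. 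The target constant $2+2\norm{y}_2+\beta(6K+2)$ should then appear as the sum of the three bounds.

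For $T_1$, the spectral normalization gives $\norm{\hat A}_2=\norm{\hat X}_2^2=1$. Hence $\norm{2\hat A(U-V)}_F\le2\norm{U-V}_F$.

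For $T_2$, the map is linear in $U$. Its operator norm is at most $\frac1{\sqrt n}\bigl(\norm{y\bOne_n^\top}_2+\norm{\bOne_ny^\top}_2\bigr)=\frac{2}{\sqrt n}\norm{y}_2\sqrt n=2\norm{y}_2$. This gives the middle term.

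For $T_3$, I would use the differential. Write $P=\bOne_n\bOne_n^\top/n$, so that $\norm{P}_2=1$. Then $\frac{1}{n}g(U)\bOne_n^\top U=(UU^\top P-P)U$ and $\frac1n\bOne_n g(U)^\top U=(PUU^\top-P)U$. So
\[
T_3=\beta\bigl(UU^\top PU+PUU^\top U-2PU\bigr).
\]
Its directional derivative in direction $E$ is
\[
\beta\bigl(EU^\top PU+UE^\top PU+UU^\top PE+PEU^\top U+PUE^\top U+PUU^\top E-2PE\bigr).
\]
Each of the six cubic terms has operator-norm factors $\norm{P}_2\le1$ and two copies of $U$. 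Bounding each $\norm{U}_2\le\norm{U}_F\le\sqrt K$ and using $\norm{ABC}_F\le\norm{A}_2\norm{B}_F\norm{C}_2$ (with $E$ in the Frobenius slot), each cubic term is at most $K\norm{E}_F$. The last term is at most $2\norm{E}_F$. So the derivative norm is at most $\beta(6K+2)\norm{E}_F$.

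Since $\mathcal B$ is convex, the mean value inequality along the segment $V+t(U-V)$, which stays in $\mathcal B$, gives $\norm{T_3(U)-T_3(V)}_F\le\beta(6K+2)\norm{U-V}_F$. Summing the three bounds gives $\sfL$.

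The main obstacle is $T_3$. First, one has to rewrite the $1/n$ factors through the projector $P$, so that no $n$-dependence survives; this is exactly where the normalization $h=g/\sqrt n$ pays off. Second, one has to keep the Frobenius and operator norms in the right slots so that each cubic term costs exactly $K$. The constants are not tight, for instance $\norm{U}_2$ could be kept instead of $\sqrt K$, but the crude bound already matches \eqref{eq:L}.
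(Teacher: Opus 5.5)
Your proof is correct and reaches exactly the constant in \eqref{eq:L}, but it is organized differently from the paper's.

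The paper never separates $y$ from the penalty. It writes the gradient as $2\hat AU+\bigl(\hat{\bOne}_n\bar y(U)^\top+\bar y(U)\hat{\bOne}_n^\top\bigr)U$, with $\bar y(U)=y+\beta h(U)$ and $\hat{\bOne}_n=\bOne_n/\sqrt n$; the unit norm of $\hat{\bOne}_n$ plays the role of your $\norm{P}_2=1$. It then bounds finite differences directly through splittings such as $\bar y(U)^\top U-\bar y(V)^\top V=\bar y(U)^\top\Delta+[\bar y(U)-\bar y(V)]^\top V$. This uses $\norm{\bar y(U)}_2\le\norm{y}_2+\beta(K+1)$ and $\norm{h(U)-h(V)}_2\le(\norm{U}_2+\norm{V}_2)\norm{\Delta}_F$.

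That telescoping needs no calculus and no convexity; convexity of the ball only enters later, in Corollary~\ref{cor:descent}. Your Jacobian-plus-mean-value step does rely on the segment staying in the ball. In exchange, your split makes visible that the $y$-part is globally Lipschitz with constant $2\norm{y}_2$, and that only the $\beta$-part needs the radius $\sqrt K$.

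The bookkeeping is the same in disguise. Your four terms $EU^\top PU$, $UE^\top PU$, $PEU^\top U$, $PUE^\top U$ come from differentiating $h$ and match the paper's $4\beta K$. The remaining terms $\beta(UU^\top PE+PUU^\top E-2PE)=\beta\bigl(\hat{\bOne}_n h(U)^\top+h(U)\hat{\bOne}_n^\top\bigr)E$ are what the paper absorbs into $2\norm{\bar y(U)}_2$ via $\norm{h(U)}_2\le K+1$.

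The paper's grouping buys something later. Its local refinement in the appendix is itself a differential bound like yours, but it keeps $y+\beta h(U)$ together, so this piece vanishes at feasible points. Combined with Lemma~\ref{lem:opnorm}, that gives the local constant $2+2\norm{y}_2+4\beta$. Your term-by-term bound on $UU^\top PE$, $PUU^\top E$ and $2PE$ discards that cancellation. This is harmless for the global statement, but it would block that refinement.
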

\begin{corollary}
\label{cor:descent}
If $U\in\Omega$, $U^+\in\Proj_\Omega\bigl(U-\alpha\nabla\widehat f_y(U)\bigr)$ is an exact Euclidean projection and $0<\alpha<1/\sfL$, then
\begin{equation}
\widehat f_y(U^+)\leq\widehat f_y(U)-\frac{1-\alpha \sfL}{2\alpha}\norm{U^+-U}_F^2.
\label{eq:descent}
\end{equation}
\end{corollary}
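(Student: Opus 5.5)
The plan is to prove Corollary~\ref{cor:descent} from the descent lemma given by Proposition~\ref{prop:smooth}, combined with the variational characterization of the projection onto $\Omega$. The only subtlety is that $\Omega$ is nonconvex, since it is the intersection of the nonnegative orthant with a sphere. So the usual obtuse-angle inequality for projections onto convex sets is not available. I will replace it by the weaker but sufficient fact that a projection is a global minimizer of distance over the set.

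\textbf{Step 1: quadratic upper bound.} Both $U$ and $U^+$ lie in $\Omega$, so both satisfy $\norm{\cdot}_F=\sqrt K$. The ball $\{V:\norm{V}_F\le\sqrt K\}$ is convex, so the whole segment $[U,U^+]$ stays inside it. Proposition~\ref{prop:smooth} therefore applies along the segment. Writing $\widehat f_y(U^+)-\widehat f_y(U)$ as $\int_0^1\inner{\nabla\widehat f_y(U+t(U^+-U)),U^+-U}\,dt$ and using the Lipschitz bound $\sfL$ gives
\[
\widehat f_y(U^+)\le \widehat f_y(U)+\inner{\nabla\widehat f_y(U),U^+-U}+\frac{\sfL}{2}\norm{U^+-U}_F^2 .
\]

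\textbf{Step 2: projection optimality.} Set $G=\nabla\widehat f_y(U)$. Because $U^+$ is an exact Euclidean projection of $U-\alpha G$ onto $\Omega$, and $U\in\Omega$, we have
\[
\norm{U^+-(U-\alpha G)}_F^2\le\norm{U-(U-\alpha G)}_F^2=\alpha^2\norm{G}_F^2 .
\]
Expanding the left side gives $\norm{U^+-U}_F^2+2\alpha\inner{G,U^+-U}\le 0$, that is,
\[
\inner{G,U^+-U}\le-\frac{1}{2\alpha}\norm{U^+-U}_F^2 .
\]
This step needs only that $U$ is feasible, not convexity. This is the point I expected to be the main obstacle, and comparing against the feasible competitor $U$ resolves it.

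\textbf{Step 3: combine.} Substituting the bound from Step 2 into the inequality from Step 1 gives
\[
\widehat f_y(U^+)\le\widehat f_y(U)-\Bigl(\frac{1}{2\alpha}-\frac{\sfL}{2}\Bigr)\norm{U^+-U}_F^2 ,
\]
and the coefficient equals $(1-\alpha\sfL)/(2\alpha)$, which is exactly (\ref{eq:descent}). The assumption $\alpha<1/\sfL$ only serves to make this coefficient positive, so that the inequality is a genuine decrease.

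The one remaining point is the domain on which Proposition~\ref{prop:smooth} certifies the Lipschitz bound. If that domain were only $\Omega$ itself, Step 1 would fail, because chords between points of the sphere leave the sphere. The proposition, however, is stated on the full convex ball $\{\norm{U}_F\le\sqrt K\}$, so the segment argument in Step 1 is justified.
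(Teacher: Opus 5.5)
Your proof is correct and follows the paper's argument essentially verbatim. The paper also compares the projection against the feasible competitor $U$ to obtain $\inner{\nabla\widehat f_y(U),U^+-U}\le-\frac{1}{2\alpha}\norm{U^+-U}_F^2$, then applies the quadratic upper bound from Proposition~\ref{prop:smooth} on the convex Frobenius ball that contains both $U$ and $U^+$, and combines the two inequalities, with no convexity of $\Omega$ required.
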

As a direct consequence by summing over (\ref{eq:descent}) and using the boundedness below of $\hat f_y$ on $\Omega$, we have for fixed $y$ and a constant step size $0<\alpha<1/\sfL$, the projected gradient iterates satisfy 
\[
\sum_{t=0}^{\infty}\norm{U_{t+1}-U_t}_F^2<\infty,
\qquad
\norm{U_{t+1}-U_t}_F\to0.
\]

The second scaling challenge is the repeated memory transfers in the primal projected gradient descent (PGD) updating steps. Specifically, for each fixed dual variable $y$, the primal PGD iteration computes
\begin{equation}
U^+\in\Proj_\Omega\bigl(U-\alpha\nabla_U\hat f_y(U)\bigr),
\qquad
\Proj_\Omega(W)=\sqrt K\,\frac{W_+}{\norm{W_+}_F},
\qquad W_+=\max(W,0),
\label{eq:projected-primal-step}
\end{equation}
when $W_+\ne0$. Then, the NLR algorithm checks the constraint residual in the augmented-Lagrangian term $h(U)=(UU^\top\bOne_n-\bOne_n) / \sqrt{n}$ and applies the dual update in \eqref{eq:dualnormalized} when relative primal movement is sufficiently small. A straightforward implementation would materialize the whole $n\times r$ gradient and projected update, then read successive iterates again to measure movement. These intermediate matrices and repeated memory transfers become prohibitively expensive and memory-bound at very large $n$. To address this issue, we develop a memory-efficient GPU implementation of the primal-dual NLR iterations at scale.

\section{Our Method: Efficient GPU Implementation}
\label{sec:gpu}

Since our method \GEM is a memory-efficient GPU implementation of the $K$-means NLR algorithm and they are mathematically equivalent between (\ref{eq:ALM_primal_dual}) and (\ref{eq:normalized}), we first list the key updating formula, where a naive CPU/GPU implementation requires materialization of the full gradient with memory-bounded matrix multiplications. Then, we propose a new IO-aware implementation with fused gradient kernels based on the cuBLAS and CUTLASS GEMM libraries, which combine the entrywise update with the matrix multiplication used to evaluate that product. A key component in our gradient kernels is to accumulate the statistics needed for online projection and data movement monitoring before overwriting the current factor. This eliminates both the full objective-gradient buffer and a second copy of the factor.

\subsection{GEM-KMeans updating formula}
\label{sec:gpu-update}

Define
\begin{equation}
 p=U^\top\bOne_n,\qquad
 q=Up-\bOne_n=g(U),\qquad
 \bar y=\bar y_{\text{eff}}
 =\frac{y}{\sqrt n}+\frac{\beta}{n}q.
 \label{eq:gpu-vectors}
\end{equation}
Using the gradient in (\ref{eq:gradient}) with $\Theta=\hat X^\top U$ and $s=U^\top\bar y$, the positive part of the tentative update is
\begin{equation}
 W=\mleft[
 U+2\alpha\hat X\Theta
 -\alpha\bigl(\bar y p^\top+\bOne_n s^\top\bigr)
 \mright]_+.
 \label{eq:gpu-positive-update}
\end{equation}
Note that the Euclidean projection in
(\ref{eq:projected-primal-step}) is given by $U^+=\sqrt K W / \norm{W}_F$ when $\norm{W}_F>0$, where
the positive-part operation is entrywise and can be fused with the gradient computation. The normalization depends on all entries and follows a global reduction. Associating the objective-gradient product as $\hat X(\hat X^\top U)$ avoids materializing
$\hat X\hat X^\top$. The two constraint-gradient terms are generated from vectors rather than stored as $n\times r$ matrices.

\subsection{Fused gradient update and statistics}
\label{sec:fusion}

We first compute $\Theta=\hat X^\top U$, using cuBLAS. The second product, $\hat X\Theta$, is computed by a CUTLASS GEMM whose epilogue applies the entrywise update and accumulates the statistics needed for normalization and convergence criterion.
The kernel configuration is described in \zcref[S]{app:gpuops}.

For each valid output entry $(i,j)$, let $a_{ij}$ denote the GEMM accumulator for $(\hat X\Theta)_{ij}$ and $w_{ij}$ denote the entry of the nonnegative update $W$ in (\ref{eq:gpu-positive-update}) before normalization:
\begin{equation}
 w_{ij}=\max\mleft\{
 U_{ij}+2\alpha a_{ij}
 -\alpha\bar y_i p_j-\alpha s_j,\;0
 \mright\}.
 \label{eq:entrywise-fusion}
\end{equation}
The epilogue combines the GEMM accumulator with the old factor entry and the vectors \(\bar y,p,s\), producing \(w_{ij}\) without storing intermediate gradient matrices of size $n \times r$. Once the normalization multiplier \(c\) is available, the squared movement follows from
\begin{equation}
 \norm{cW-U_{\text{prev}}}_F^2
 =c^2\norm{W}_F^2+\norm{U_{\text{prev}}}_F^2
  -2c\inner{W,U_{\text{prev}}}_F.
 \label{eq:movement-identity}
\end{equation}

Before overwriting the old factor entries, the epilogue
accumulates partial sums for
\begin{equation}
 a=\norm{W}_F^2,\qquad
 b=\norm{U_{\text{prev}}}_F^2,\qquad
 h=\inner{W,U_{\text{prev}}}_F.
 \label{eq:fused-statistics}
\end{equation}
Here, $a$ is the squared Frobenius norm of the unnormalized
update $W$, $b$ is the squared Frobenius norm of the previous
factor, and $h$ is their Frobenius inner product.
The epilogue accumulates these statistics in FP64 before overwriting the old factor entries with $W$. This allows us to estimate primal movement without retaining a second copy of $U$. Finally, the partial sums per tile are reduced to the global scalars $a,b,h$ and are used to normalize the factor using $U\gets cW$, where $c=\operatorname{FP32}(\sqrt{K/a})$.
This scaling pass follows the global reduction because $c$ depends on all output tiles.

\subsection{Primal movement and stopping}
\label{sec:movement}

Using the accumulated statistics, we estimate relative primal movement as
\begin{equation}
 \hat\delta
 =\sqrt{\frac{\max\{c^2a+b-2ch,0\}}{K}}.
 \label{eq:movement-estimate}
\end{equation}
The clamp removes negative values caused by floating-point cancellation. The estimate uses the rounded multiplier $c$, but does not account for entrywise rounding when $cW$ is stored.

We use the accumulated squared norms $a$ and $b$ rather than replacing $c^2a+b$ with $2K$.
Floating-point normalization does not enforce the target squared norm $K$ exactly, and this difference can affect the movement estimate near convergence.

After scaling, we recompute $p=(U^+)^\top\bOne_n$ and $q=U^+p-\bOne_n$. When $\hat\delta<\epsilon_d$, we take the dual step $y\gets y+(\beta/\sqrt n)q$. We terminate when
\[
 \hat\delta<\min\{\epsilon_d,\epsilon_c\},
 \qquad
 \frac{\norm{q}_2}{\sqrt n}<\epsilon_c,
 \qquad
 t+1\geq T_{\min}.
\]
The iteration limit is $T$. At each iteration, primal movement determines whether
to update the dual variable, while both movement and constraint feasibility determine whether to terminate.

The dual ascent tolerance $\epsilon_d$ and convergence tolerance $\epsilon_c$ are configurable parameters and could vary based on the clustering problem.

\begin{algorithm}[t]
\caption{GEM iteration with fused update and statistics}
\label{alg:gem}
\begin{algorithmic}[1]
\Require $\hat X$, $U\in\Omega$, $y=0$, $\alpha,\beta>0$, $T$, $T_{\min}$, $\epsilon_d$, $\epsilon_c$.
\State $p\gets U^\top\bOne$; $q\gets Up-\bOne$;
\For{$t=0,\dotsc,T-1$}
    \State $\Theta\gets\hat X^\top U$;
    \State $\bar y\gets y/\sqrt n+(\beta/n)q$;
           $s\gets U^\top\bar y$;
    \State Compute $W$ by (\ref{eq:gpu-positive-update})
           and tile partials of $a,b,h$ in one GEMM epilogue;
    \State Reduce partials and transfer $a,b,h$ to the host;
    \If{$\sqrt a\leq10^{-20}$}
      \State Report normalization breakdown and \textbf{break};
    \EndIf
    \State $c\gets\operatorname{float}(\sqrt{K/a})$;
    \State $U\gets cW$;
    \State Compute $\hat\delta$ by
           (\ref{eq:movement-estimate});
    \State $p\gets U^\top\bOne$; $q\gets Up-\bOne$;
    \If{$\hat\delta<\epsilon_d$}
      \State $y\gets y+(\beta/\sqrt n)q$;
    \EndIf
    \State $\rho\gets\norm{q}_2/\sqrt n$;
    \If{$\hat\delta<\min\{\epsilon_d,\epsilon_c\}$
         and $\rho<\epsilon_c$ and $t+1\geq T_{\min}$}
      \State \textbf{break};
    \EndIf
\EndFor
\State \textbf{return} $(U,y)$.
\end{algorithmic}
\end{algorithm}

\subsection{Arithmetic, storage, and synchronization}
\label{sec:cost}

The two matrix products require approximately $4ndr$ floating-point operations per primal iteration, counting a multiply and an add separately. The matrix-vector products, entrywise updates, and reductions add $O(nr)$ work. The total is therefore $4ndr+O(nr)$ per iteration. These counts describe useful arithmetic per iteration and exclude work on padded tile entries, preprocessing, and initialization.  For output tiles of size $B_M\times B_N$, the number
of tiles is
\begin{equation}
 N_{\text{tile}}
 =\left\lceil\frac{n}{B_M}\right\rceil
  \left\lceil\frac{r}{B_N}\right\rceil.
 \label{eq:gpu-tile-count}
\end{equation}
The reduction workspace stores three FP64 partial sums per GEMM output tile, requiring $24N_{\text{tile}}$ bytes, where $N_{\text{tile}}=\lceil n/B_M\rceil\lceil r/B_N\rceil$.
Here, $B_M$ and $B_N$ are the output tile dimensions.

\zcref[S]{tab:memory-settings} lists the device buffers used during iterations. Their total storage is
\begin{equation}
 b_f(nd+nr+dr+4n+2r)
 +3b_s(N_{\text{tile}}+1),
 \label{eq:gpu-storage}
\end{equation}
where $b_f=4$ bytes for FP32 entries and $b_s=8$ bytes for FP64 statistics. 

Since $W$ overwrites $U$, the iteration requires only one $n\times r$ factor buffer.
Fusing the gradient update and statistics into the GEMM epilogue removes the need to store the objective-gradient matrix and a separate copy of the previous factor, saving two $n\times r$ buffers, or $8nr$ bytes in FP32.

The host waits for the reduced statistics before computing the normalization multiplier, and for the feasibility norm before checking termination.

\begin{table}[t]
    \centering
    \begin{minipage}[t]{0.46\textwidth}
        \centering
        \caption{Device buffers used during the iteration, excluding library workspaces and preprocessing temporaries.}
        \label{tab:memory-settings}
        \begin{tabular}[t]{@{}llr@{}}
        \toprule
        Buffer & Type & Elements\\
        \midrule
        Input $\hat X$ & FP32 & $nd$\\
        Factor $U$, overwritten by $W$ & FP32 & $nr$\\
        Product $\Theta$ & FP32 & $dr$\\
        $\bOne,q,y/\sqrt n,\bar y$ & FP32 & $4n$\\
        $p,s$ & FP32 & $2r$\\
        Three tile-statistic arrays & FP64 & $3N_{\text{tile}}$\\
        Three reduced statistics & FP64 & $3$\\
        \bottomrule
        \end{tabular}
    \end{minipage}
    \hfill
    \begin{minipage}[t]{0.46\textwidth}
        \centering
        \caption{Experimental settings in \zcref{subsec:sim_bench} with the NLR factorization rank $r$.\vspace{\baselineskip}}\label{tab:gmm_regime}
        \begin{tabular}[t]{@{}rrrrr@{}}
        \toprule
        $n$ & $d$ & $K$ & $r$ & Seeds\\
        \midrule
        $10^6$            & 50  & 20  & 30  & 20\\
        $10^6$            & 200 & 100 & 300 & 5\\
        $10^7$            & 50  & 20  & 30  & 20\\
        $10^7$            & 200 & 100 & 300 & 5\\
        $10^8$            & 50  & 20  & 30  & 5\\
        $2\times 10^8$    & 50  & 20  & 60  & 5 \\
        \bottomrule
        \end{tabular}
    \end{minipage}
    \vspace{-0.2in}
\end{table}

\textbf{Relation to the FlashAttention mechanism.} FlashAttention~\citep{FlashAttention2022} computes attention in tiles without storing the full attention matrix in global memory. Our \GEM follows the same principle of consuming matrix-product intermediates on chip. The CUTLASS epilogue uses the GEMM accumulators directly to update the factor and collect statistics, avoiding a separate objective-gradient buffer. Its projection depends, however, on the global norm $\norm{W}_F$. Moreover, FlashAttention updates its softmax normalization incrementally as tiles are processed. In \GEM, the normalization multiplier also depends on the global Frobenius norm of $W$, so our implementation reduces the tile statistics before applying a separate scaling pass.

\section{Experiments}
We compare \GEM (GEM hereafter) with three GPU implementations of Lloyd's algorithm: \FlKM~\citep{yang2026flash}, \FPK~\citep{omer2020fastpytorchkmeans}, and \FaKM~\citep{fastkmeans2025}. All baselines use one random initialization per seed and a stopping tolerance of $10^{-4}$. These tolerances are not directly equivalent because the implementations aggregate centroid movements differently; in particular, \FPK uses squared movements. \FaKM is additionally capped at 2000 iterations. We evaluate clustering quality against available labels and total runtime on a synthetic GMM dataset and two real-world datasets: mass cytometry (CyTOF) cell profiles and Gaia stellar observations. We report ARI for balanced datasets and NMI otherwise. All experiments run on a single NVIDIA H200 GPU; GEM uses a CUTLASS SM80 Tensor Core kernel with the customized fused epilogue described in \zcref{sec:gpu}.

\subsection{Simulation benchmarks}\label{subsec:sim_bench}
We consider three synthetic data regimes: large $n$ with small $K$, large $n$ with large $K$, and moderate $n$ with large $K$. We investigate two questions: (i) does solving the NLR relaxation recover the planted partition more reliably than Lloyd-type heuristics, and (ii) what does this cost in computation time as $n$ grows to $\sim10^8$? In each setup, the dataset is generated from a balanced $K$-component GMM in $\bbR^d$ with centroid separation $\Delta=\gamma\overline{\text{CNR}}$, $\gamma=1.0$. The settings are listed in \zcref{tab:gmm_regime}. For each seed, we generate a new dataset and use it for all methods. \zcref[S]{fig:gmm} shows the comparison results. GEM attains ARI close to one (i.e., exact recovery) in nearly every run, whereas the $K$-means baselines generally show lower accuracy and greater variation across seeds. A few isolated GEM runs fall below its typical performance, so the result is not uniform across initializations. \zcref[S]{fig:gmm} also illustrates how the runtime--accuracy trade-off changes with scale. \FPK is the fastest method in every displayed configuration. \GEM incurs greater computational cost to optimize a nonconvex factorization of the SDP relaxation, but achieves higher clustering accuracy. At $n=2\times10^8$, GPU memory becomes a limiting factor: \FaKM runs out of memory, while \FlKM requires CPU--GPU block-streaming, whose transfer overhead makes it slower than \GEM.

\begin{figure}[!htbp]
\centering
\includegraphics[width=0.95\linewidth]{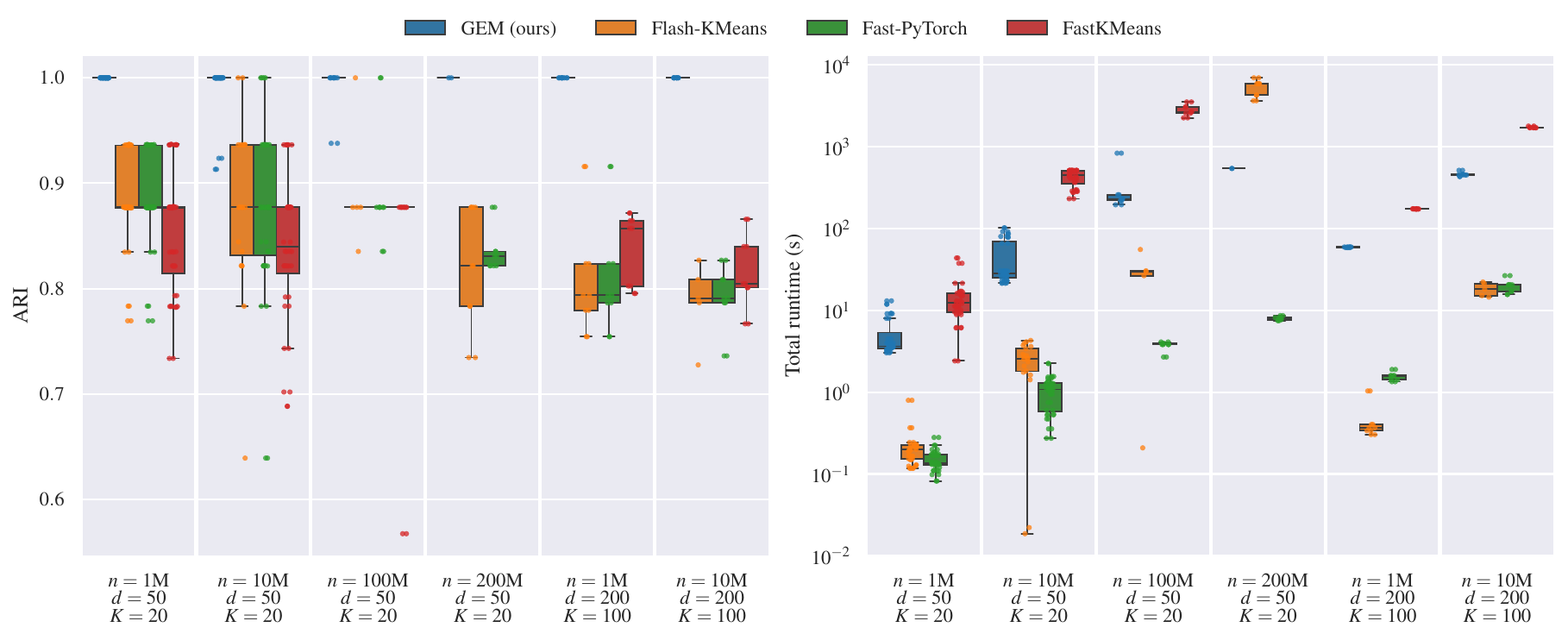}
\caption{ARI and total runtime for five balanced GMM configurations. Each colored dot represents one run; boxes show the median and interquartile range across seeds. GEM achieves consistently high ARI, with occasional lower-scoring runs.}
\label{fig:gmm}
\end{figure}

\textbf{Recovery versus runtime.} We compare GEM with \FlKM on GMMs with $K=20$, $d=50$, and different $(n,\gamma)$ combinations. \FlKM is run on 50 independently seeded datasets; on 10 matched datasets, we evaluate GEM with iteration caps $T\in\{100,300,1000,3000,10000,30000\}$, allowing runs to stop earlier upon convergence. The results shown in \zcref{fig:gmm_frontier}. Across all three settings, \FlKM rarely reaches the statistical floor: it succeeds in only 2--3 times and most runs instead finish with misclustering error between $0.07$ and $0.3$. Notably, doubling the separation to $\gamma=2$ at $n=10^6$ leaves this failure pattern largely intact, while GEM reaches the floor on all 10 matched datasets within a few seconds of iteration time. GEM also recovers on all evaluated seeds at $n=10^7,\gamma=1$, although there the required time rises to roughly $10^2$ seconds. These results favor GEM when reliable recovery matters, with a substantial runtime cost in the largest setting.

\begin{figure}[!htbp]
\centering
\includegraphics[width=0.95\linewidth]{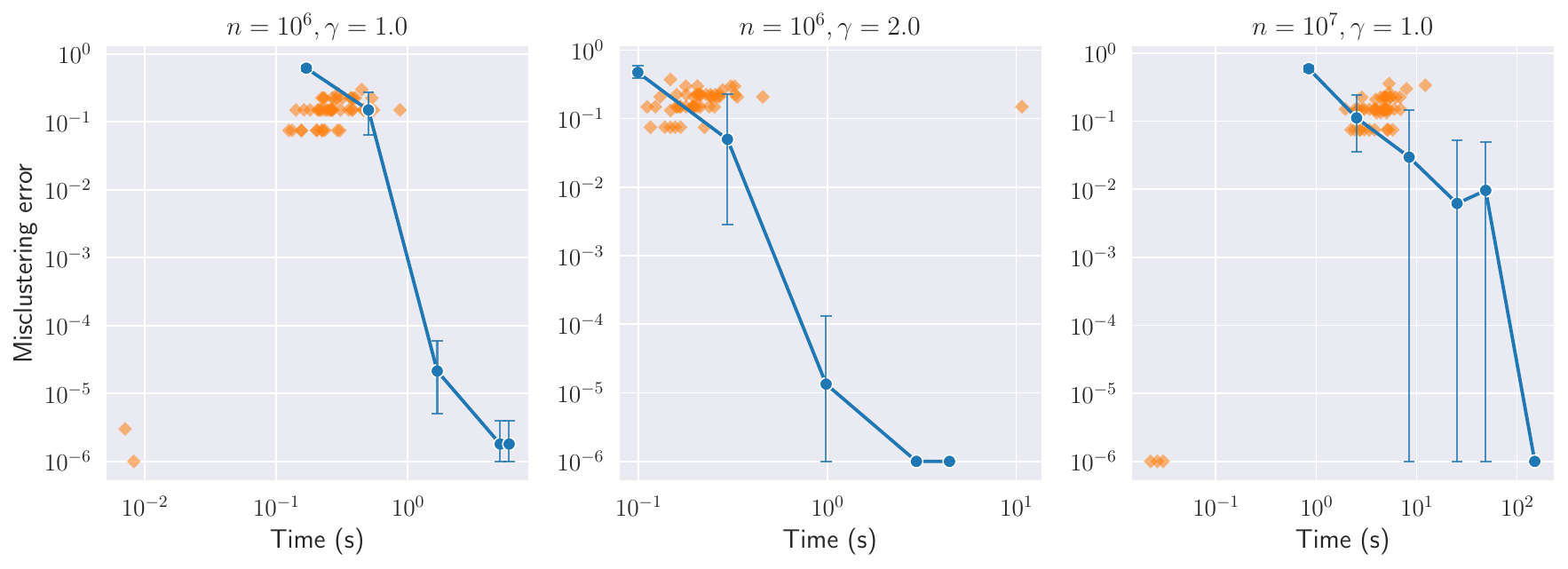}
\caption{Misclustering error versus time for balanced Gaussian mixtures under three $(n,\gamma)$ settings. \textcolor{tab10orange}{Diamonds} show 50 individual \FlKM runs. The \textcolor{tab10blue}{line} shows GEM's mean across 10 matched seeds at each iteration budget; bars span the range of error. Both axes are logarithmic. Times measure the GEM iteration loop and \FlKM wall time, respectively.}
\label{fig:gmm_frontier}
\end{figure}

\subsection{Real data applications}

\subsubsection{CyTOF}
We evaluate the methods on a real-world mass cytometry (CyTOF) dataset\citep{LEVINE2015184, CyTOFClean} containing 104,184 labeled cell profiles measured across 32 protein markers. The labels identify 14 gated cell populations with markedly unequal sizes: the largest contains 26,366 cells, whereas the smallest contains 304. \zcref[S]{fig:cytof_full} compares NMI and total runtime over 100 seeds. GEM achieves the highest median NMI, approximately 0.82, compared with approximately 0.79 for each baseline. \FPK has the shortest runtime due to its lower convergence criteria; GEM takes roughly 3 seconds per run.

\begin{figure}[!htbp]
\centering
\includegraphics[width=0.95\linewidth]{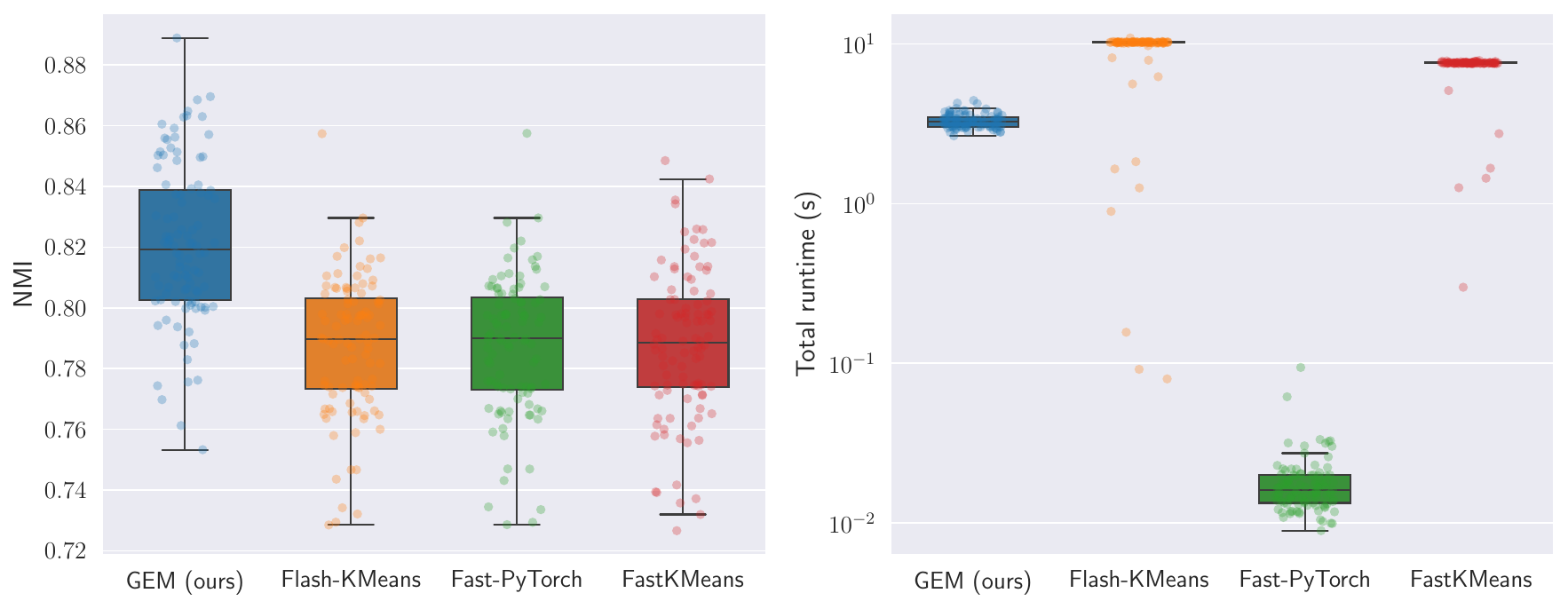}
\caption{Clustering performance on the full CyTOF dataset across 100 random seeds. (Left) NMI against the gated cell labels; (Right) total runtime on a logarithmic scale. Boxes indicate the median and interquartile range, and dots represent individual runs. GEM achieves the highest median NMI.}
\label{fig:cytof_full}
\end{figure}

\subsubsection{Gaia DR3}

\begin{figure}[!htbp]
    \centering
    \includegraphics[width=0.96\linewidth]{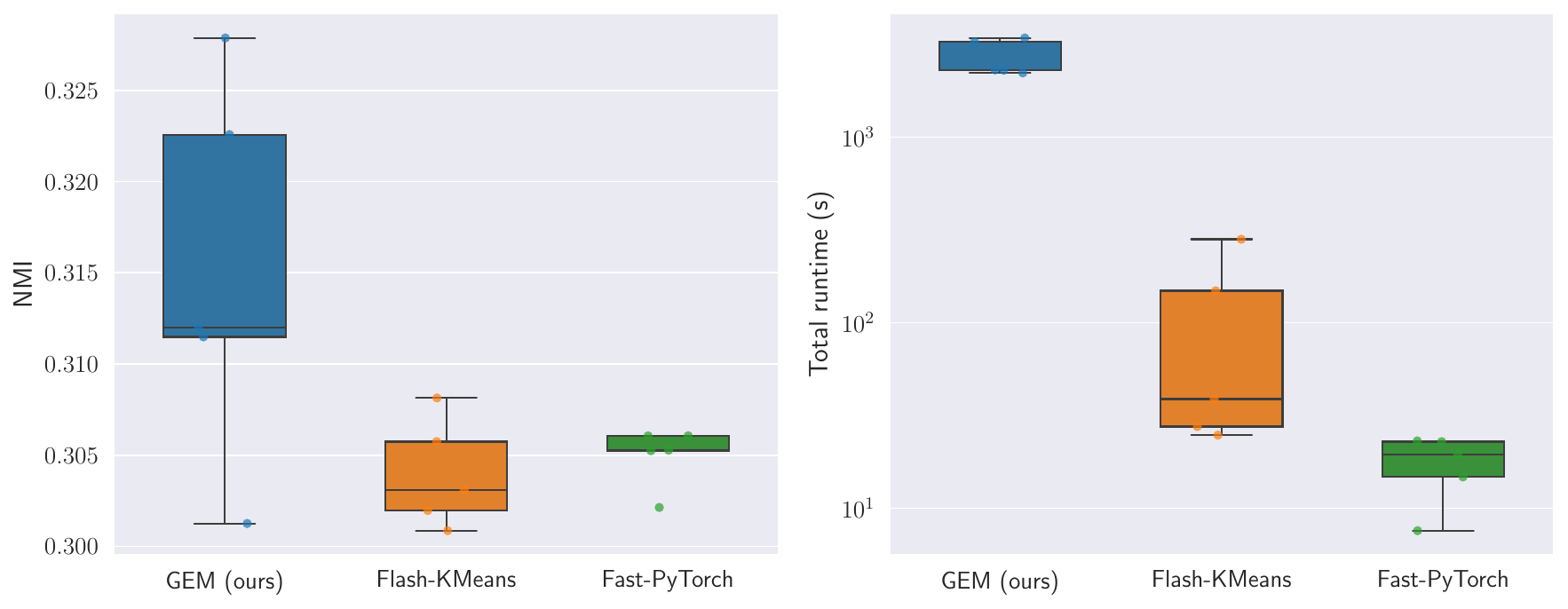}
    \caption{Clustering performance on 118 million Gaia DR3 sources across five random seeds. (Left) NMI against the seven reference spectral classes. (Right) total runtime on a logarithmic scale. Boxes show the interquartile range, horizontal lines indicate medians, and dots represent individual runs.}
    \label{fig:gaia}
\end{figure}

Gaia Data Release 3 (DR3) \citep{GaiaDR3} is the third data release of European Space Agency's Gaia space mission, and provides astrometry and photometry for about 1.8 billion sources. It also includes low-resolution BP/RP spectra for about 219 million sources, each represented by 110 Gauss--Hermite coefficients \citep{GaiaXP}. We choose 118,387,161 sources with reliable astrometry and a confident spectral class from the ESP--HS pipeline \citep{GaiaPara}. Each band is normalized to a unit $L^2$ norm. The total data size is roughly 53 GB in single precision. The seven spectral classes (O, B, A, F, G, K, M) serve as reference labels for $K = 7$. Although the geometry of this dataset is challenging for Euclidean $K$-means clustering, its scale makes it a useful benchmark for computational scalability; our aim is to evaluate clustering algorithms at this scale rather than to develop a new method for stellar classification. \zcref[S]{fig:gaia} compares clustering quality and total runtime over five random seeds. GEM achieves the highest mean and median NMI, with a mean of $0.315$, compared with $0.304$ for \FlKM and $0.305$ for \FPK. These results demonstrate that GEM can improve agreement with the reference spectral classes even at the scale of 118 million sources. This improvement comes at a higher computational cost: GEM's median runtime is approximately 38 minutes, compared with 39 seconds for \FlKM and 20 seconds for \FPK.

\newpage
\printbibliography

@misc{nvidia2026cublas,
	author = {{NVIDIA}},
	title = {{cuBLAS} Library Documentation},
	year = {2026},
	howpublished = {CUDA Toolkit documentation},
	url = {https://docs.nvidia.com/cuda/cublas/index.html},
	note = {Accessed September 25, 2026}
}

@misc{nvidia2026cutlass,
	author = {{NVIDIA}},
	title = {{CUTLASS} 3.0 {GEMM} {API}: Collective Epilogue},
	year = {2026},
	howpublished = {CUTLASS documentation},
	url = {https://docs.nvidia.com/cutlass/4.5.2/media/docs/cpp/gemm_api_3x.html},
	note = {Accessed September 25, 2026}
}

@inproceedings{arthur2007kmeanspp,
	author = {Arthur, David and Vassilvitskii, Sergei},
	title = {{k-means++}: The Advantages of Careful Seeding},
	booktitle = {Proceedings of the Eighteenth Annual ACM-SIAM Symposium on Discrete Algorithms},
	pages = {1027--1035},
	year = {2007},
	url = {https://theory.stanford.edu/~sergei/papers/kMeansPP-soda.pdf}
}

@article{bahmani2012scalable,
	author = {Bahmani, Bahman and Moseley, Benjamin and Vattani, Andrea and Kumar, Ravi and Vassilvitskii, Sergei},
	title = {Scalable {K-Means++}},
	journal = {Proceedings of the VLDB Endowment},
	volume = {5},
	number = {7},
	pages = {622--633},
	year = {2012},
	doi = {10.14778/2180912.2180915},
	url = {https://arxiv.org/abs/1203.6402}
}

@inproceedings{sculley2010webscale,
	author = {Sculley, D.},
	title = {Web-Scale {K-Means} Clustering},
	booktitle = {Proceedings of the 19th International Conference on World Wide Web},
	pages = {1177--1178},
	year = {2010},
	doi = {10.1145/1772690.1772862},
	url = {https://doi.org/10.1145/1772690.1772862}
}

@misc{rapids2026cuml,
	author = {{NVIDIA}},
	title = {{cuML KMeans} Documentation},
	year = {2026},
	url = {https://docs.nvidia.com/cuml/latest/api/generated/cuml.cluster.KMeans/},
	note = {Accessed September 25, 2026}
}

@misc{faiss2026clustering,
	author = {{Faiss developers}},
	title = {{GPU k-means} Example},
	year = {2026},
	url = {https://github.com/facebookresearch/faiss/wiki/GPU-k-means-example},
	note = {Online software documentation; accessed September 25, 2026}
}

@article{johnson2021billion,
	author = {Johnson, Jeff and Douze, Matthijs and J{\'e}gou, Herv{\'e}},
	title = {Billion-Scale Similarity Search with {GPUs}},
	journal = {IEEE Transactions on Big Data},
	volume = {7},
	number = {3},
	pages = {535--547},
	year = {2021},
	doi = {10.1109/TBDATA.2019.2921572},
	url = {https://doi.org/10.1109/TBDATA.2019.2921572}
}

@inproceedings{Santhanam2022PLAID,
	author = {Santhanam, Keshav and Khattab, Omar and Potts, Christopher and Zaharia, Matei},
	title = {{PLAID}: An Efficient Engine for Late Interaction Retrieval},
	year = {2022},
	isbn = {9781450392365},
	publisher = {Association for Computing Machinery},
	address = {New York, NY, USA},
	url = {https://doi.org/10.1145/3511808.3557325},
	doi = {10.1145/3511808.3557325},
	booktitle = {Proceedings of the 31st ACM International Conference on Information \& Knowledge Management},
	pages = {1747--1756},
	numpages = {10},
	location = {Atlanta, GA, USA},
	series = {CIKM '22}
}

@inproceedings{Liu2025ClusterKV,
	author = {Liu, Guangda and Li, Chengwei and Zhao, Jieru and Zhang, Chenqi and Guo, Minyi},
	title = {{ClusterKV}: Manipulating {LLM KV} Cache in Semantic Space for Recallable Compression},
	year = {2025},
	publisher = {IEEE Press},
	url = {https://doi.org/10.1109/DAC63849.2025.11132479},
	doi = {10.1109/DAC63849.2025.11132479},
	booktitle = {Proceedings of the 62nd Annual ACM/IEEE Design Automation Conference},
	numpages = {7},
	location = {San Francisco, California, United States},
	series = {DAC '25},
	pages = {1--7}
}

@inproceedings{zhu2025tacticadaptivesparseattention,
	title = {{Tactic: Adaptive Sparse Attention with Clustering and Distribution Fitting for Long-Context LLMs}},
	author = {Zhu, Kan and Tang, Tian and Xu, Qinyu and Jin, Zhan and Gu, Yile and Zeng, Zhichen and Kadekodi, Rohan and Zhao, Liangyu and Li, Ang and Krishnamurthy, Arvind and Kasikci, Baris},
	year = {2026},
	url = {https://proceedings.iclr.cc/paper_files/paper/2026/hash/33f94d79acf71051d6a27f4d8889e20e-Abstract-Conference.html},
	booktitle = {International Conference on Learning Representations}
}

@inproceedings{FlashAttention2022,
	author = {Dao, Tri and Fu, Daniel Y. and Ermon, Stefano and Rudra, Atri and R{\'e}, Christopher},
	title = {{FlashAttention}: Fast and Memory-Efficient Exact Attention with {IO}-Awareness},
	year = {2022},
	isbn = {9781713871088},
	publisher = {Curran Associates Inc.},
	address = {Red Hook, NY, USA},
	booktitle = {Advances in Neural Information Processing Systems},
	articleno = {1189},
	numpages = {16},
	location = {New Orleans, LA, USA},
	volume = {35},
	pages = {16344--16359},
	doi = {10.52202/068431-1189},
	url = {https://proceedings.neurips.cc/paper_files/paper/2022/hash/67d57c32e20fd0a7a302cb81d36e40d5-Abstract-Conference.html}
}

@article{FeiChen2022HiddenIntegrality,
	author = {Yingjie Fei and Yudong Chen},
	title = {{Hidden Integrality and Semirandom Robustness of {SDP} Relaxation for Sub-Gaussian Mixture Model}},
	journal = {Mathematics of Operations Research},
	volume = {47},
	number = {3},
	pages = {2464--2493},
	year = {2022},
	doi = {10.1287/moor.2021.1216},
	url = {https://pubsonline.informs.org/doi/10.1287/moor.2021.1216}
}

@inproceedings{macqueen1967multivariate,
	title = {Some methods for classification and analysis of multivariate observations},
	author = {MacQueen, J.},
	booktitle = {Proceedings of the Fifth Berkeley Symposium on Mathematical Statistics and Probability},
	volume = {1},
	pages = {281--297},
	year = {1967},
	publisher = {University of California Press},
	address = {Berkeley and Los Angeles},
	url = {https://digicoll.lib.berkeley.edu/record/113015}
}

@article{lloyd1982least,
	title = {{Least squares quantization in PCM}},
	author = {Lloyd, Stuart P.},
	journal = {IEEE Transactions on Information Theory},
	volume = {28},
	number = {2},
	pages = {129--137},
	year = {1982},
	publisher = {IEEE},
	doi = {10.1109/TIT.1982.1056489},
	url = {https://doi.org/10.1109/TIT.1982.1056489}
}

@techreport{dasgupta2008hardness,
	title = {The hardness of k-means clustering},
	author = {Dasgupta, Sanjoy},
	year = {2008},
	institution = {University of California, San Diego},
	number = {CS2008-0916},
	url = {https://escholarship.org/uc/item/2qm3k10c}
}

@article{aloise2009np,
	title = {{NP}-hardness of {Euclidean} sum-of-squares clustering},
	author = {Aloise, Daniel and Deshpande, Amit and Hansen, Pierre and Popat, Preyas},
	journal = {Machine Learning},
	volume = {75},
	number = {2},
	pages = {245--248},
	year = {2009},
	publisher = {Springer},
	doi = {10.1007/s10994-009-5103-0},
	url = {https://link.springer.com/article/10.1007/s10994-009-5103-0}
}

@article{von2007tutorial,
	title = {A tutorial on spectral clustering},
	author = {von Luxburg, Ulrike},
	journal = {Statistics and Computing},
	volume = {17},
	number = {4},
	pages = {395--416},
	year = {2007},
	publisher = {Springer},
	doi = {10.1007/s11222-007-9033-z},
	url = {https://link.springer.com/article/10.1007/s11222-007-9033-z}
}

@inproceedings{ng2001spectral,
	title = {On spectral clustering: Analysis and an algorithm},
	author = {Ng, Andrew Y. and Jordan, Michael I. and Weiss, Yair},
	volume = {14},
	year = {2001},
	booktitle = {Advances in Neural Information Processing Systems},
	pages = {849--856},
	url = {https://proceedings.neurips.cc/paper/2001/hash/801272ee79cfde7fa5960571fee36b9b-Abstract.html}
}

@article{he2011symmetric,
	title = {Symmetric nonnegative matrix factorization: Algorithms and applications to probabilistic clustering},
	author = {He, Zhaoshui and Xie, Shengli and Zdunek, Rafal and Zhou, Guoxu and Cichocki, Andrzej},
	journal = {IEEE Transactions on Neural Networks},
	volume = {22},
	number = {12},
	pages = {2117--2131},
	year = {2011},
	publisher = {IEEE},
	doi = {10.1109/TNN.2011.2172457},
	url = {https://doi.org/10.1109/TNN.2011.2172457}
}

@article{kuang2015symnmf,
	title = {{SymNMF}: nonnegative low-rank approximation of a similarity matrix for graph clustering},
	author = {Kuang, Da and Yun, Sangwoon and Park, Haesun},
	journal = {Journal of Global Optimization},
	volume = {62},
	number = {3},
	pages = {545--574},
	year = {2015},
	publisher = {Springer},
	doi = {10.1007/s10898-014-0247-2},
	url = {https://link.springer.com/article/10.1007/s10898-014-0247-2}
}

@article{wang2012nonnegative,
	title = {Nonnegative matrix factorization: A comprehensive review},
	author = {Wang, Yu-Xiong and Zhang, Yu-Jin},
	journal = {IEEE Transactions on Knowledge and Data Engineering},
	volume = {25},
	number = {6},
	pages = {1336--1353},
	year = {2013},
	publisher = {IEEE},
	doi = {10.1109/TKDE.2012.51},
	url = {https://doi.org/10.1109/TKDE.2012.51}
}

@article{peng2007approximating,
	title = {Approximating {K}-means-type clustering via semidefinite programming},
	author = {Peng, Jiming and Wei, Yu},
	journal = {SIAM Journal on Optimization},
	volume = {18},
	number = {1},
	pages = {186--205},
	year = {2007},
	publisher = {SIAM},
	doi = {10.1137/050641983},
	url = {https://epubs.siam.org/doi/10.1137/050641983}
}

@article{mixon2017clustering,
	title = {Clustering subgaussian mixtures by semidefinite programming},
	author = {Mixon, Dustin G and Villar, Soledad and Ward, Rachel},
	journal = {Information and Inference: A Journal of the IMA},
	volume = {6},
	number = {4},
	pages = {389--415},
	year = {2017},
	publisher = {Oxford University Press},
	doi = {10.1093/imaiai/iax001},
	url = {https://academic.oup.com/imaiai/article/6/4/389/3074185}
}

@article{giraud2019partial,
	title = {Partial recovery bounds for clustering with the relaxed {$K$}-means},
	author = {Giraud, Christophe and Verzelen, Nicolas},
	journal = {Mathematical Statistics and Learning},
	volume = {1},
	number = {3/4},
	pages = {317--374},
	year = {2018},
	doi = {10.4171/MSL/8},
	url = {https://ems.press/journals/msl/articles/16225}
}

@inproceedings{zhuang2024statistically,
	title = {{Statistically optimal $K$-means clustering via nonnegative low-rank semidefinite programming}},
	author = {Zhuang, Yubo and Chen, Xiaohui and Yang, Yun and Zhang, Richard},
	booktitle = {International Conference on Learning Representations (ICLR)},
	volume = {2024},
	pages = {18825--18857},
	year = {2024},
	url = {https://proceedings.iclr.cc/paper_files/paper/2024/hash/5212d235f5d07a25170d4807d4d2d824-Abstract-Conference.html}
}

@inproceedings{xu2025scalablesecondorderriemannianoptimization,
	author = {Xu, Peng and Hou, Chun-Ying and Chen, Xiaohui and Zhang, Richard Y.},
	title = {{Scalable Second-order Riemannian Optimization for $K$-means Clustering}},
	year = {2026},
	booktitle = {International Conference on Learning Representations (ICLR)},
	url = {https://proceedings.iclr.cc/paper_files/paper/2026/hash/a7311af5d481dc9f6d3c993dbf8d2e21-Abstract-Conference.html}
}

@InProceedings{ZhuangChenYang2022LASDP,
	title = 	 {Likelihood Adjusted Semidefinite Programs for Clustering Heterogeneous Data},
	author =       {Zhuang, Yubo and Chen, Xiaohui and Yang, Yun},
	booktitle = 	 {Proceedings of the 40th International Conference on Machine Learning},
	pages = 	 {43326--43346},
	year = 	 {2023},
	volume = 	 {202},
	url = 	 {https://proceedings.mlr.press/v202/zhuang23a.html}
}

@article{chenyang2021threshold,
	author = {Chen, Xiaohui and Yang, Yun},
	doi = {10.1109/TIT.2021.3063155},
	issn = {1557-9654},
	journal = {IEEE Transactions on Information Theory},
	month = jun,
	number = {6},
	pages = {4223--4238},
	title = {{Cutoff for Exact Recovery of Gaussian Mixture Models}},
	volume = {67},
	year = {2021},
	url = {https://doi.org/10.1109/TIT.2021.3063155}
}

@ARTICLE{qianzhangchen2022KMeans,
	author = {Qian, Wei and Zhang, Yuqian and Chen, Yudong},
	journal = {IEEE Transactions on Information Theory},
	title = {Structures of Spurious Local Minima in {$k$}-Means},
	year = {2022},
	volume = {68},
	number = {1},
	pages = {395--422},
	doi = {10.1109/TIT.2021.3122465},
	url = {https://doi.org/10.1109/TIT.2021.3122465}
}

@article{yang2015sdpnal+,
	title = {{SDPNAL+}: a majorized semismooth {Newton-CG} augmented {Lagrangian} method for semidefinite programming with nonnegative constraints},
	author = {Yang, Liuqin and Sun, Defeng and Toh, Kim-Chuan},
	journal = {Mathematical Programming Computation},
	volume = {7},
	number = {3},
	pages = {331--366},
	year = {2015},
	publisher = {Springer},
	doi = {10.1007/s12532-015-0082-6},
	url = {https://link.springer.com/article/10.1007/s12532-015-0082-6}
}

@article{burer2003nonlinear,
	title = {A nonlinear programming algorithm for solving semidefinite programs via low-rank factorization},
	author = {Burer, Samuel and Monteiro, Renato D. C.},
	journal = {Mathematical Programming},
	volume = {95},
	number = {2},
	pages = {329--357},
	year = {2003},
	publisher = {Springer},
	doi = {10.1007/s10107-002-0352-8},
	url = {https://link.springer.com/article/10.1007/s10107-002-0352-8}
}

@inproceedings{kulis2007fast,
	title = {Fast low-rank semidefinite programming for embedding and clustering},
	author = {Kulis, Brian and Surendran, Arun C and Platt, John C},
	booktitle = {Proceedings of the Eleventh International Conference on Artificial Intelligence and Statistics},
	pages = {235--242},
	year = {2007},
	volume = {2},
	series = {Proceedings of Machine Learning Research},
	publisher = {PMLR},
	url = {https://proceedings.mlr.press/v2/kulis07a.html}
}

@book{NocedalWright2006_NumOpt,
	author = {Jorge Nocedal and Stephen J. Wright},
	title = {Numerical Optimization},
	series = {Springer Series in Operations Research and Financial Engineering},
	publisher = {Springer New York, NY},
	year = {2006},
	doi = {10.1007/978-0-387-40065-5},
	isbn = {978-0-387-40065-5},
	edition = {2nd},
	url = {https://link.springer.com/book/10.1007/978-0-387-40065-5}
}

@article{LEVINE2015184,
	title = {Data-Driven Phenotypic Dissection of {AML} Reveals Progenitor-like Cells that Correlate with Prognosis},
	journal = {Cell},
	volume = {162},
	number = {1},
	pages = {184--197},
	year = {2015},
	issn = {0092-8674},
	doi = {10.1016/j.cell.2015.05.047},
	url = {https://www.sciencedirect.com/science/article/pii/S0092867415006376},
	author = {Jacob H. Levine and Erin F. Simonds and Sean C. Bendall and Kara L. Davis and El-ad D. Amir and Michelle D. Tadmor and Oren Litvin and Harris G. Fienberg and Astraea Jager and Eli R. Zunder and Rachel Finck and Amanda L. Gedman and Ina Radtke and James R. Downing and Dana Pe’er and Garry P. Nolan}
}

@misc{CyTOFClean,
	title = {Clustering benchmark data: 32-dimensional data set from {Levine et al.} (2015)},
	author = {Lukas Weber},
	year = {2015},
	note = {GitHub repository; accessed September 25, 2026},
	url = {https://github.com/lmweber/benchmark-data-Levine-32-dim}
}

@article{GaiaDR3,
	author = {{Gaia Collaboration} and Vallenari, A. and Brown, A. G. A. and Prusti, T. and de Bruijne, J. H. J. and others},
	title = {{Gaia} Data Release 3: Summary of the content and survey properties},
	doi = "10.1051/0004-6361/202243940",
	journal = {Astronomy \& Astrophysics},
	year = 2023,
	volume = 674,
	pages = "A1",
	url = {https://doi.org/10.1051/0004-6361/202243940}
}

@article{GaiaXP,
	author = {De Angeli, F. and Weiler, M. and Montegriffo, P. and Evans, D. W. and Riello, M. and others},
	title = {{Gaia} Data Release 3: Processing and validation of {BP/RP} low-resolution spectral data},
	doi = "10.1051/0004-6361/202243680",
	journal = {Astronomy \& Astrophysics},
	year = 2023,
	volume = 674,
	pages = "A2",
	url = {https://doi.org/10.1051/0004-6361/202243680}
}

@article{GaiaPara,
	author = {Creevey, O. L. and Sordo, R. and Pailler, F. and Fr{\'e}mat, Y. and Heiter, U. and others},
	title = {{Gaia} Data Release 3: Astrophysical parameters inference system ({Apsis}). {I}. Methods and content overview},
	doi = "10.1051/0004-6361/202243688",
	journal = {Astronomy \& Astrophysics},
	year = 2023,
	volume = 674,
	pages = "A26",
	url = {https://doi.org/10.1051/0004-6361/202243688}
}

@inproceedings{CarsonMixonVillarWard_manifold-Kmeans,
	author = {Carson, Timothy and Mixon, Dustin G. and Villar, Soledad and Ward, Rachel},
	booktitle = {2017 International Conference on Sampling Theory and Applications (SampTA)},
	title = {Manifold optimization for $k$-means clustering},
	year = {2017},
	pages = {73--77},
	doi = {10.1109/SAMPTA.2017.8024388},
	url = {https://ieeexplore.ieee.org/document/8024388}
}

@misc{fastkmeans2025,
	author = {Benjamin Clavi{\'e} and Benjamin Warner},
	title = {{fastkmeans}: Accelerated {KMeans} Clustering in {PyTorch} and {Triton}},
	year = {2025},
	howpublished = {\url{https://github.com/AnswerDotAI/fastkmeans/}}
}

@article{hunt2023openclusters,
	author = {Hunt, Emily L. and Reffert, Sabine},
	title = {Improving the open cluster census. {II}. An all-sky cluster catalogue with {Gaia DR3}},
	journal = {Astronomy \& Astrophysics},
	volume = {673},
	pages = {A114},
	year = {2023},
	doi = {10.1051/0004-6361/202346285},
	url = {https://doi.org/10.1051/0004-6361/202346285}
}

@misc{omer2020fastpytorchkmeans,
	author = {Omer, Sehban},
	title = {{fast-pytorch-kmeans}},
	year = {2020},
	note = {GitHub repository; accessed September 25, 2026},
	url = {https://github.com/DeMoriarty/fast_pytorch_kmeans},
}

@article{halko2011,
	author = {Halko, N. and Martinsson, P. G. and Tropp, J. A.},
	title = {Finding Structure with Randomness: Probabilistic Algorithms for Constructing Approximate Matrix Decompositions},
	journal = {SIAM Review},
	volume = {53},
	number = {2},
	pages = {217-288},
	year = {2011},
	doi = {10.1137/090771806},
}

@misc{lu2016statistical,
	author = {Lu, Yu and Zhou, Harrison H.},
	title = {Statistical and Computational Guarantees of {Lloyd's} Algorithm and Its Variants},
	year = {2016},
	archivePrefix = {arXiv},
	eprint = {1612.02099},
	primaryClass = {math.ST},
	url = {https://arxiv.org/abs/1612.02099}
}

@misc{abbas2023semdedup,
	title = {{SemDeDup}: Data-efficient learning at web-scale through semantic deduplication},
	author = {Abbas, Amro and Tirumala, Kushal and Simig, D{\'a}niel and Ganguli, Surya and Morcos, Ari S},
	year = {2023},
	archivePrefix = {arXiv},
	eprint = {2303.09540},
	primaryClass = {cs.LG},
	url = {https://arxiv.org/abs/2303.09540}
}

@misc{yang2026flash,
	title = {{Flash-KMeans: Fast and Memory-Efficient Exact K-Means}},
	author = {Yang, Shuo and Xi, Haocheng and Zhao, Yilong and Li, Muyang and Fan, Xiaoze and Zhang, Jintao and Cai, Han and Lin, Yujun and Li, Xiuyu and Keutzer, Kurt and others},
	year = {2026},
	archivePrefix = {arXiv},
	eprint = {2603.09229},
	primaryClass = {cs.DC},
	url = {https://arxiv.org/abs/2603.09229}
}

\clearpage
\appendix
\section{Techinical Results}
\subsection{Proof of \texorpdfstring{\zcref{prop:smooth}}{Proposition 1}}\label{appx:lip}
Fix $y$, and set
\[
\hat{\bOne}_n\coloneqq\frac{\bOne_n}{\sqrt n},
\qquad
\bar{y}(U)\coloneqq y+\beta h(U).
\]
Then $\norm{\hat{\bOne}_n}_2=1$ and $h(U)=UU^\top\hat{\bOne}_n-\hat{\bOne}_n$. Since $A$ is
symmetric, differentiation gives
\[
\nabla f(U)
=2AU+\hat{\bOne}_n\bar{y}(U)^\top U+\bar{y}(U)\hat{\bOne}_n^\top U.
\]
Throughout, we use the mixed spectral/Frobenius inequality
\begin{equation}\label{eqn:mixed}
\norm{AB}_F\leq\norm{A}_2\norm{B}_F.
\end{equation}

Take $U,V$ in the Frobenius ball
$\{W:\norm{W}_F\leq\sqrt K\}$, and let
$\Delta\coloneqq U-V$. Using
\[
UU^\top-VV^\top=\Delta U^\top+V\Delta^\top
\]
and (\ref{eqn:mixed}), we obtain
\[
\norm{UU^\top-VV^\top}_F
\leq\bigl(\norm{U}_2+\norm{V}_2\bigr)\norm{\Delta}_F.
\]
Consequently,
\begin{equation}
\norm{h(U)-h(V)}_2=\norm{(UU^\top-VV^\top)e}_2\leq\bigl(\norm{U}_2+\norm{V}_2\bigr)\norm{\Delta}_F.
\label{eqn:g_bound}
\end{equation}
Moreover, throughout this ball,
\[
\norm{U}_2\leq\sqrt K,\qquad
\norm{h(U)}_2
\leq\norm{UU^\top}_2+\norm{e}_2
\leq K+1,
\]
so
\[
\norm{\bar{y}(U)}_2\leq\norm{y}_2+\beta(K+1).
\]

For the second term in the gradient, decompose
\[
\bar{y}(U)^\top U-\bar{y}(V)^\top V
=\bar{y}(U)^\top\Delta+
 [\bar{y}(U)-\bar{y}(V)]^\top V.
\]
Since $\norm{\hat{\bOne}_n}_2=1$, the mixed norm inequality and
(\ref{eqn:g_bound}) give
\begin{align*}
\norm{\hat{\bOne}_n\bar{y}(U)^\top U-\hat{\bOne}_n\bar{y}(V)^\top V}_F
&\leq \norm{\bar{y}(U)}_2\norm{\Delta}_F
 +\beta\norm{h(U)-h(V)}_2\norm{V}_2\\
&\leq \left(\norm{\bar{y}(U)}_2+
 \beta\bigl(\norm{U}_2+\norm{V}_2\bigr)\norm{V}_2\right)
 \norm{\Delta}_F.
\end{align*}
For the third term, use the separate decomposition
\[
\bar{y}(U)\hat{\bOne}_n^\top U-\bar{y}(V)\hat{\bOne}_n^\top V
=\bar{y}(U)\hat{\bOne}_n^\top\Delta+
 [\bar{y}(U)-\bar{y}(V)]\hat{\bOne}_n^\top V.
\]
The same estimates yield
\[
\norm{\bar{y}(U)\hat{\bOne}_n^\top U-\bar{y}(V)\hat{\bOne}_n^\top V}_F
\leq \left(\norm{\bar{y}(U)}_2+
 \beta\bigl(\norm{U}_2+\norm{V}_2\bigr)\norm{V}_2\right)
 \norm{\Delta}_F.
\]

Finally, using $\norm{A}_2=1$ and
$\norm{U}_2,\norm{V}_2\leq\sqrt K$, we have
\begin{align*}
\norm{\nabla f(U)-\nabla f(V)}_F
&\leq\left(2+2\norm{\bar{y}(U)}_2+4\beta K\right)\norm{\Delta}_F\\
&\leq\left(2+2\norm{y}_2+\beta(6K+2)\right)\norm{\Delta}_F.
\end{align*}
Thus the gradient is Lipschitz on the stated ball with constant
$\sfL$ in (\ref{eq:L}).

\subsection{Proof of \texorpdfstring{\zcref{cor:descent}}{Corollary 1}}
Now let $U\in\Omega$ and
$U^+\in\Proj_\Omega(U-\alpha\nabla f(U))$ be an exact Euclidean projection, and set $D\coloneqq U^+-U$. Since $U\in\Omega$, the projection property implies
\[
\norm{U^+-(U-\alpha\nabla f(U))}_F^2\leq\norm{U-(U-\alpha\nabla f(U))}_F^2.
\]
Expanding both sides gives
\[
\inner{\nabla f(U),D}\leq-\frac{1}{2\alpha}\norm{D}_F^2.
\]
Both $U$ and $U^+$ lie in the ball of radius $\sqrt K$, and that ball is convex. The smoothness inequality therefore gives
\[
f(U^+)\leq f(U)+\inner{\nabla f(U),D} +\frac{\sfL}{2}\norm{D}_F^2.
\]
Combining the last two inequalities proves (\ref{eq:descent}).

\subsection{Refining the Lipschitz constant}
The global bound in \zcref[S]{prop:smooth} can be sharpened locally near full feasibility using the following result.

\begin{lemma}\label{lem:opnorm}
For any $U\in\Omega$ satisfying $h(U)=0$, we have $\norm{U}_2=1$.
\end{lemma}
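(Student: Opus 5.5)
We need to prove that $U\geq0$, $\norm{U}_F^2=K$ and $UU^\top\bOne_n=\bOne_n$ together force $\norm{U}_2=1$. Set $Z\coloneqq UU^\top$. Then $Z$ is symmetric, positive semidefinite, and entrywise nonnegative (since $U\geq0$). It also satisfies $Z\bOne_n=\bOne_n$ (since $h(U)=0$) and $\Tr(Z)=\norm{U}_F^2=K$. Because $\norm{U}_2^2=\norm{Z}_2=\lambda_{\max}(Z)$, it suffices to show $\lambda_{\max}(Z)=1$.

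\textbf{Lower bound.} From $Z\bOne_n=\bOne_n$, the vector $\bOne_n$ is an eigenvector of $Z$ with eigenvalue $1$. Hence $\lambda_{\max}(Z)\geq1$.

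\textbf{Upper bound.} This is the main step, and I would handle it with a Perron--Frobenius/row-sum argument. Since $Z\geq0$ entrywise and every row sum equals $1$, $Z$ is a row-stochastic matrix. Therefore $\norm{Z}_\infty=\max_i\sum_j\abs{Z_{ij}}=1$. By symmetry, $\norm{Z}_1=\norm{Z}_\infty=1$ as well.

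I would then bound the spectral norm in either of two equivalent ways:
\begin{itemize}
\item the interpolation bound $\norm{Z}_2\leq\sqrt{\norm{Z}_1\norm{Z}_\infty}=1$;
\item the elementary estimate $\rho(Z)\leq\norm{Z}_\infty=1$, together with the fact that for a symmetric matrix the spectral norm equals the spectral radius.
\end{itemize}
Either way, $\lambda_{\max}(Z)\leq1$. Combined with the lower bound, $\norm{U}_2^2=1$, and hence $\norm{U}_2=1$.

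\textbf{Where the hypotheses are used.} Nonnegativity of $U$ is essential: without $Z\geq0$, a row sum of $1$ would not control $\sum_j\abs{Z_{ij}}$. The trace condition $\norm{U}_F^2=K$ is not needed for the equality $\norm{U}_2=1$. It only indicates that the remaining eigenvalues of $Z$ lie in $[0,1]$ and sum to $K$, which is consistent with the result.

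I expect no real obstacle. The only point to check carefully is that the entrywise bound $\abs{Z_{ij}}=Z_{ij}$ holds, which relies on $U\geq0$ and not merely on $Z\succeq0$.
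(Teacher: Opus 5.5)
Your proof is correct and matches the paper's argument essentially step for step: the paper also sets $M=UU^\top$, uses entrywise nonnegativity and $M\bOne_n=\bOne_n$ to get $\norm{M}_1=\norm{M}_\infty=1$, hence $\norm{M}_2\leq\sqrt{\norm{M}_1\norm{M}_\infty}=1$, and uses the eigenvalue $1$ for the matching lower bound. Your side remarks are also correct: the spectral-radius alternative works, and the condition $\norm{U}_F^2=K$ is not needed for the conclusion.
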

\begin{proof}
Let $M\coloneqq UU^\top$. Then $M$ is symmetric positive semidefinite and entrywise nonnegative, and $M\bOne_n=\bOne_n$. Hence $\norm{M}_\infty=\norm{M}_1=1$, so
\[
\norm{M}_2\leq
\sqrt{\norm{M}_1\norm{M}_\infty}=1.
\]
On the other hand, $M\bOne_n=\bOne_n$ implies that $1$ is an eigenvalue of $M$. Therefore,
$\norm{U}_2^2=\norm{M}_2=1$.
\end{proof}

\begin{remark}
\zcref{lem:opnorm} applies to every fully feasible factor, regardless of optimality. In particular, if $U^\ast(U^\ast)^\top=Z^\ast$ and $Z^\ast$ is the cluster membership matrix, then, after reordering the samples, $Z^\ast$ is block-diagonal with blocks $\bOne\bOne^\top/n_k$. Each block is the orthogonal projection onto $\Span(\bOne)$ and has spectrum contained in $\{0,1\}$.
\end{remark}

To quantify the improvement near feasibility, note that the differential of $h$ satisfies
\[
\mdif*{h(U)}[D]=(DU^\top+UD^\top)\hat{\bOne}_n,
\qquad
\norm{\mdif*{h(U)}[D]}_2
\leq 2\norm{U}_2\norm{D}_F.
\]
Differentiating the gradient therefore gives the bound
\[
\norm{\nabla^2 f(U)[D]}_F
\leq
\left(
2+2\norm{y+\beta h(U)}_2+4\beta\norm{U}_2^2
\right)\norm{D}_F.
\]
At any fully feasible $U$, \zcref{lem:opnorm} yields
\begin{equation}\label{eqn:R_feasible}
\norm{\nabla^2 f(U)[D]}_F
\leq
\left(2+2\norm{y}_2+4\beta\right)\norm{D}_F.
\end{equation}
Consequently, for fixed $y$ and $\beta$, and any $\varepsilon>0$, the gradient is Lipschitz with constant
$2+2\norm{y}_2+4\beta+\varepsilon$ on a sufficiently small ball around any fully feasible point. This gives a local improvement over the global bound in \zcref{prop:smooth}.

The bound has no explicit dependence on $n$ or $K$ apart from any dependence through $\norm{y}_2$ and $\beta$. We next control the optimal dual multiplier. Throughout this calculation, $A=-X X^\top$ denotes the objective matrix before normalization.

Following \citet[Assumption 2]{zhuang2024statistically}, consider the dual certificate constructed by \citet{chenyang2021threshold}:
\[
(y^\ast)^\top=\begin{pmatrix}
(y_{G_1^\ast}^\ast)^\top & \dots & (y_{G_K^\ast}^\ast)^\top
\end{pmatrix},
\]
where, writing $A_k\coloneqq A_{G_k^\ast G_k^\ast}$,
\[
y_{G_k^\ast}^\ast=-\frac{2}{n_k}A_k\bOne_{n_k}-\frac{\lambda}{n_k}\bOne_{n_k}+\frac{1}{n_k^2}\bOne_{n_k}\bigl(\bOne_{n_k}^\top A_k\bOne_{n_k}\bigr).
\]
Here $\lambda=d\sigma^2+mC\Theta^2/4$ for some $C\in(0,1)$,
and
$m=\min_{k\neq\ell}2n_kn_\ell/(n_k+n_\ell)$ is the minimum pairwise harmonic mean of the cluster sizes. For $i\in G_k^\ast$,
\[
[A_k\bOne_{n_k}]_i
=-\sum_{j\in G_k^\ast}X_i^\top X_j
=-n_kX_i^\top\bar X_k,
\qquad
\bOne_{n_k}^\top A_k\bOne_{n_k}
=-n_k^2\norm{\bar X_k}_2^2.
\]
Thus the certificate formula is equivalently
\[
y_i^\ast
=2X_i^\top\bar X_k-\norm{\bar X_k}_2^2
-\frac{\lambda}{n_k}.
\]

Under the normalization $s\coloneqq\norm{X}_2^2>0$, the objective matrix and the trace multiplier are divided by $s$. Since the equality constraint is additionally divided by $\sqrt n$, the corresponding normalized multiplier is
\(
\hat y^\ast\coloneqq(\sqrt n/s)y^\ast.
\)
We bound this multiplier on the event that the constructed multipliers satisfy dual feasibility and complementary slackness at the true cluster matrix $Z^\ast$.

First, for $K\geq2$, the certificate satisfies $0\leq\lambda\leq s$. Let $B\in\bbR^{n\times n}$ be the symmetric, entrywise nonnegative dual multiplier associated with the constraint $Z\geq0$. Complementary slackness gives $\inner{B,Z^\ast}=0$, which implies $B_{G_k^\ast G_k^\ast}=0$ for every $k$. Indeed, its dual slack matrix satisfies
\[
Q\coloneqq
\lambda I+\frac12\bigl(\bOne_n(y^\ast)^\top
+y^\ast\bOne_n^\top\bigr)+A-B\succeq0,
\qquad B\geq0,
\]
and equality of the primal and dual objectives gives
\[
\bOne_n^\top y^\ast=-\inner{A,Z^\ast}-K\lambda.
\]
We therefore obtain
\[
0\leq\hat{\bOne}_n^\top Q\hat{\bOne}_n
=-\inner{A,Z^\ast-\hat{\bOne}_n\hat{\bOne}_n^\top}
 -(K-1)\lambda-\hat{\bOne}_n^\top B\hat{\bOne}_n.
\]
Because $B$ is entrywise nonnegative and $Z^\ast-\hat{\bOne}_n\hat{\bOne}_n^\top$ is an orthogonal projection of rank $K-1$, while $0\preceq-A\preceq sI$,
\[
(K-1)\lambda
\leq-\inner{A,Z^\ast-\hat{\bOne}_n\hat{\bOne}_n^\top}
\leq(K-1)s.
\]
Together with the chosen nonnegative value of $\lambda$, this proves the claim.

For each cluster, define
\[
q_k\coloneqq\frac{\bOne_{n_k}}{\sqrt{n_k}},
\qquad
a_k\coloneqq q_k^\top A_kq_k.
\]
The block formula for the certificate becomes
\[
y_{G_k^\ast}^\ast
=-\frac{1}{\sqrt{n_k}}
 \bigl(2A_k+(\lambda-a_k)I\bigr)q_k.
\]
Since $-sI\preceq A_k\preceq0$, we have $-s\leq a_k\leq0$. Together with $0\leq\lambda\leq s$, this gives $0\leq\lambda-a_k\leq2s$, and hence
\[
-2sI
\preceq2A_k+(\lambda-a_k)I
\preceq2sI.
\]
Consequently,
\[
\norm{2A_k+(\lambda-a_k)I}_2\leq2s,
\qquad
\norm*{y_{G_k^\ast}^\ast}_2\leq\frac{2s}{\sqrt{n_k}}.
\]
Summing the squared bounds over the disjoint blocks yields
\[
\norm{\hat y^\ast}_2
=\frac{\sqrt n}{s}\norm{y^\ast}_2
\leq2\sqrt{n\sum_{k=1}^K\frac{1}{n_k}}.
\]

Consequently, if the cluster sizes are balanced in the sense that $n_k\geq cn/K$ for a constant $c>0$, then
\[
\norm{\hat y^\ast}_2\leq\frac{2K}{\sqrt c}.
\]
In particular, for equal cluster sizes, $\norm{\hat y^\ast}_2\leq2K$. Thus the limiting local smoothness bound at the normalized dual certificate satisfies
\[
2+2\norm{\hat y^\ast}_2+4\beta
\leq
2+4\sqrt{n\sum_{k=1}^K\frac{1}{n_k}}+4\beta.
\]
For balanced clusters this is $O(K+\beta)$, with an implied constant depending only on $c$ and no additional dependence on the sample size, dimension, noise level, or separation.

This conclusion concerns the optimal dual certificate. To extend it to the dual iterates, one additionally needs control of $\norm{y_t-\hat y^\ast}_2$, since
\[
\norm{y_t}_2
\leq\norm{\hat y^\ast}_2
+\norm{y_t-\hat y^\ast}_2.
\]

\section{Pseudocode}
\begin{algorithm}[H]
    \caption{\GEM: detailed GPU implementation}
    \label{alg:fast_nlr}
    \begin{algorithmic}[1]
        \Require Row-major $\hat X\in\bbR^{n\times d}$,
        $U\in\Omega$ on the GPU;\\
        $\alpha,\beta>0$, $T,T_{\min}$, $\epsilon_d,\epsilon_c$.
        \Statex
        \State $\tilde y\gets0$; $\gamma\gets\beta/n$;
        \Comment{Stored dual: $\tilde y=y/\sqrt n$}
        \State $B\gets256$; $G\gets\lceil n/B\rceil$;
        \Comment{CUDA block and grid sizes}
        \State $N_{\rm tile}\gets\lceil n/128\rceil\lceil r/64\rceil$;
        \Comment{CUTLASS output tiles}

        \LComment{Initial constraint residual}
        \State $\Sgemv(\flagN,r,n,1,U,r,\bOne,1,0,p,1)$;
        \Comment{cuBLAS: $p=U^\top\bOne$}
        \State $\Sgemv(\flagT,r,n,1,U,r,p,1,0,q_r,1)$;
        \Comment{cuBLAS: $q_r=Up$}
        \State $\Saxpy(n,-1,\bOne,1,q_r,1)$;
        \Comment{cuBLAS: $q_r\gets q_r-\bOne$}

        \For{$t=0,\dotsc,T-1$}
            \LComment{(a) Objective-gradient intermediate}
            \State $\Sgemm(\flagN,\flagT,r,d,n,
                1,U,r,\hat X,d,0,\Theta,r)$;
            \Comment{cuBLAS: $\Theta=\hat X^\top U$}

            \LComment{(b) Constraint-gradient vectors}
            \State $\mathtt{compute\_ybar<<<G, B>>>}$%
                $(\bar y,\tilde y,q_r,\gamma,n)$;
            \Comment{CUDA: $\bar y=\tilde y+\gamma q_r$}
            \State $\Sgemv(\flagN,r,n,1,U,r,\bar y,1,0,s,1)$;
            \Comment{cuBLAS: $s=U^\top\bar y$}

            \LComment{(c) CUTLASS fused update and statistics}
            \State $\mathtt{FusedGemm}(\hat X,\Theta;
                U,\bar y,p,s,\alpha,
                a_{\rm tile},b_{\rm tile},h_{\rm tile})$;
            \Comment{CUTLASS; schematic arguments}
            \LComment{\hspace{\algorithmicindent} Epilogue:
                $u\gets U_{ij}$,
                $w\gets\max\{u+2\alpha(\hat X\Theta)_{ij}
                -\alpha\bar y_i p_j-\alpha s_j,0\}$}
            \LComment{\hspace{\algorithmicindent} FP64 accumulation:
                $w^2$, $u^2$, $wu$ over valid entries}
            \LComment{\hspace{\algorithmicindent} In-place store:
                $U_{ij}\gets w$; shared-memory reduction to tile partials}

            \LComment{(d) Global reduction and projection}
            \State $\mathtt{reduce\_vplus\_partials<<<1, B>>>}$%
                $(a_{\rm tile},b_{\rm tile},h_{\rm tile},
                a_{\rm dev},b_{\rm dev},h_{\rm dev},N_{\rm tile})$;
            \LComment{\hspace{\algorithmicindent} Per-thread strided FP64 sums;
                shared-memory block reduction}
            \LComment{\hspace{\algorithmicindent}
                $a = a_\text{dev}=\sum_\tau a_\tau$,
                $b = b_\text{dev}=\sum_\tau b_\tau$,
                $h = h_\text{dev}=\sum_\tau h_\tau$}
            \State $\nu\gets\sqrt a$;
            \Comment{Host: pre-scaling norm}
            \If{$(\nu<10^{-20})$}
                \State \textbf{break};
                \Comment{Normalization breakdown}
            \EndIf
            \State $c_f\gets\operatorname{FP32}(\sqrt K/\nu)$;
            \Comment{Host: FP64 ratio}
            \State $\mathtt{cublasSscal\_64}(nr,c_f,U,1)$;
            \Comment{cuBLAS: in-place normalization}

            \LComment{(e) Movement estimate on the host}
            \State $c\gets\operatorname{FP64}(c_f)$;
            \Comment{Rounded scaling multiplier}
            \State $\Delta\gets\max\{c^2a+b-2ch,0\}$;
            \Comment{Squared movement estimate}
            \State $\hat\delta\gets\sqrt{\Delta}/\sqrt K$;
            \Comment{Relative movement}
            \State $\mathtt{dual}\gets(\Delta<\epsilon_d^2K)$;
            \Comment{Dual-update gate}

            \LComment{(f) Updated constraint residual}
            \State $\Sgemv(\flagN,r,n,1,U,r,\bOne,1,0,p,1)$;
            \Comment{cuBLAS: $p=U^\top\bOne$}
            \State $\Sgemv(\flagT,r,n,1,U,r,p,1,0,q_r,1)$;
            \Comment{cuBLAS: $q_r=Up$}
            \State $\Saxpy(n,-1,\bOne,1,q_r,1)$;
            \Comment{cuBLAS: $q_r\gets q_r-\bOne$}

            \LComment{(g) Conditional dual ascent and stopping}
            \If{$\mathtt{dual}$}
                \State $\Saxpy(n,\gamma,q_r,1,\tilde y,1)$;
                \Comment{cuBLAS: $\tilde y\gets\tilde y+\gamma q_r$}
            \EndIf
            \State $\SnrmTwo(n,q_r,1,\mathtt{constraint\_norm})$;
            \Comment{cuBLAS: host result $\|q_r\|_2$}
            \State $\rho\gets\mathtt{constraint\_norm}/\sqrt n$;
            \Comment{Host: normalized feasibility}
            \If{$\mathtt{dual}$ \textbf{and}
                $\hat\delta<\epsilon_c$ \textbf{and}
                $\rho<\epsilon_c$ \textbf{and}
                $t+1\geq T_{\min}$}
                \State \textbf{break}.
            \EndIf
        \EndFor
    \end{algorithmic}
\end{algorithm}

\clearpage
\zcref[S]{alg:nlr} shows the original CPU implementation of the NLR algorithm~\citep{zhuang2024statistically}. We collect it here for reference.
\begin{algorithm}[H]
\caption{Hardware-agnostic NLR $K$-means algorithm}\label{alg:nlr}
\label{alg:vanilla_NLR}
\begin{algorithmic}[1]
\Require Dissimilarity matrix $A=-X^\top X$;\\
         cluster parameter $K>0$, rank parameter $r\geq K$;\\
         step size $\alpha > 0$, augmentation parameter $\beta > 0$\\
         primal tolerance $\varepsilon_p$, tolerance $\varepsilon_d$.
\Statex
\For{$t=1, \dotsc, T$}
	\State $U_{t, 0}\gets U_t$;
	\LComment{Primal descent}
    \For{$s=1, \dotsc, S$}
        \State $\bar{y}\gets y+\beta(U_{t, s-1}U_{t, s-1}^\top\bOne-\bOne)$;
        \State $\nabla\mathcal{L}\gets (2A+2\ell I+\bOne_n\bar{y}^\top+\bar{y}\bOne^\top)U_{t, s-1}$;
        \State $U_{t, s}=\Proj(U_{t, s-1}-\alpha \nabla\mathcal{L})$, where $\Proj(V)=\sqrt{K}V_+/\norm{V_+}_F$;
        \Comment{Projected gradient descent}
    \EndFor
    \State $U_{t+1}\gets U_{t, S}$;
    \LComment{Dual ascent}
    \If{$\norm{U_{t+1}-U_t}_F < \varepsilon_p$}
        \State $\bm{s}\gets U_{t+1}U_{t+1}^\top\bOne-\bOne$;
        \LComment{Stopping criterion}
        \If{$\max\{\norm{U_{t+1}-U_t}_F, \norm{\bm{s}}_2\}<\varepsilon_d$}
	       \State \textbf{break};
        \EndIf
        \State $y\gets y+\beta\bm{s}$.
    \EndIf
\EndFor
\end{algorithmic}
\end{algorithm}

\clearpage

\section{GPU Implementation Details}
\label{app:gpuops}

This appendix maps the GPU calls in \zcref[S]{alg:fast_nlr} to the normalized iteration. The CUTLASS epilogue, movement estimate, and storage conventions are described in the main text.

\subsection{Storage and parameter conventions}

The matrices $\hat X$, $U$, and $\Theta$ are stored in row-major order. cuBLAS interprets these buffers as column-major matrices with transposed dimensions:
\begin{equation}
 \begin{array}{c|c|c}
 \text{Row-major matrix}
 & \text{cuBLAS interpretation}
 & \text{Leading dimension}\\
 \hline
 \hat X\in\bbR^{n\times d}
 & \hat X^\top\in\bbR^{d\times n} & d\\
 U\in\bbR^{n\times r}
 & U^\top\in\bbR^{r\times n} & r\\
 \Theta\in\bbR^{d\times r}
 & \Theta^\top\in\bbR^{r\times d} & r
 \end{array}
 \label{eq:layout}
\end{equation}
No explicit transpose is needed. CUTLASS reads the same buffers directly in row major order.

The mathematical dual variable $y$ multiplies the normalized residual $g(U)/\sqrt n$. The device buffer named $\mathtt{y}$ stores $y/\sqrt n$, and the scalar $\mathtt{penalty\_beta}$ equals $\beta/n$. Consequently, $\mathtt{compute\_ybar}$ evaluates
\[
 \bar y
 =\mathtt{y}+\mathtt{penalty\_beta}\,q_r
 =\frac{y}{\sqrt n}+\frac{\beta}{n}q_r.
\]
The conditional dual $\mathtt{Saxpy}$ adds $(\beta/n)q_r$ to this buffer, implementing $y\gets y+(\beta/\sqrt n)q_r$ in mathematical notation.

\paragraph{CUDA launch conventions.}
In \zcref{alg:fast_nlr}, $B=256$ is the thread-block size for the custom CUDA kernels and $G=\lceil n/B\rceil$ is the grid size for $\mathtt{compute\_ybar}$. The kernel $\mathtt{reduce\_vplus\_partials}$ uses one block of $B$ threads. CUTLASS configuration is described in \zcref{app:cutlass}.

\subsection{cuBLAS routine reference}

\zcref{tab:routines} summarizes the cuBLAS routines used in the iteration. The prefix \texttt{S} denotes single-precision real arithmetic. In this table, $\lambda$ and $\mu$ are generic BLAS coefficients, independent of the algorithmic parameters $\alpha$ and $\beta$.

\begin{table}[!htbp]
\centering
\small
\caption{cuBLAS routines used in the iteration. Matrices and vectors reside on the device; the norm result is returned to host memory in this implementation.}
\label{tab:routines}
\begin{tabular}{@{}lll@{}}
\toprule
Routine & BLAS level & Operation\\
\midrule
$\Sgemm$ & 3 &
$C\gets\lambda\op(A)\op(B)+\mu C$\\
$\Sgemv$ & 2 &
$v\gets\lambda\op(A)u+\mu v$\\
$\Saxpy$ & 1 &
$v\gets\lambda u+v$\\
$\SnrmTwo$ & 1 &
$\rho\gets\norm{u}_2$\\
$\Sscal$ & 1 &
$u\gets\lambda u$\\
\bottomrule
\end{tabular}
\end{table}

BLAS levels 1, 2, and 3 denote vector operations, matrix--vector operations, and matrix--matrix operations, respectively. The transpose flags specify
\[
 \op(A)=
 \begin{cases}
 A, & \flagN\;(\mathtt{CUBLAS\_OP\_N}),\\
 A^\top, & \flagT\;(\mathtt{CUBLAS\_OP\_T}).
 \end{cases}
\]
Setting $\mu=0$ overwrites the output without using its previous contents.

In $\Sgemm$, $(m,n,k)$ specify the product dimensions: $\op(A)$ is $m\times k$, $\op(B)$ is $k\times n$, and $C$ is $m\times n$. In $\Sgemv$, $(m,n)$ specify the stored matrix $A$ before applying the transpose flag. Leading dimensions refer to the stored arrays.

\paragraph{GPU calls and execution order.} \zcref[S]{tab:callsites-memory} lists the computational calls in execution order. Before the loop, calls 7--9 are also used to initialize $p$ and $q_r$.

\begin{table}[!htbp]
\centering
\footnotesize
\setlength{\tabcolsep}{2pt}
\caption{GPU kernel call sites for one normalized GEM iteration. The first column gives kernel order, not pseudocode line numbers. For cuBLAS matrix calls, flags and dimensions are listed as passed to the library.}
\label{tab:callsites-memory}
\begin{tabular}{@{}r p{0.25\textwidth}
                    p{0.24\textwidth}
                    p{0.44\textwidth}@{}}
\toprule
 & Routine & Configuration & Mathematical interpretation\\
\midrule
1 & $\Sgemm$
  & $\flagN,\flagT;\ r,d,n$
  & $\Theta^\top\gets U^\top\hat X$;
    the output buffer stores $\Theta=\hat X^\top U$.\\[4pt]

2 & $\mathtt{compute\_ybar}$
  & Elementwise CUDA kernel
  & $\bar y\gets y/\sqrt n+(\beta/n)q_r$.\\[4pt]

3 & $\Sgemv$
  & $\flagN;\ r,n$
  & $s\gets U^\top\bar y$.\\[4pt]

4 & CUTLASS $\mathtt{FusedGemm}$
  & Row-major operands and problem shape $(n, r, d)$.
  & Compute $W$ from
    (\ref{eq:gpu-positive-update}), accumulate tile
    partials of $a,b,h$, and overwrite $U$ with $W$.\\[4pt]

5 & $\mathtt{reduce\_vplus\_partials}$
  & One block of $B=256$ threads
  & Reduce the three arrays of tile partials to
    $a,b,h$ in (\ref{eq:fused-statistics}).\\[4pt]

6 & $\Sscal$
  & Length $nr$; multiplier $c$
  & Scale the overwritten factor buffer:
    $U\gets cW$.\\[4pt]

7 & $\Sgemv$
  & $\flagN;\ r,n$
  & $p\gets U^\top\bOne$.\\[4pt]

8 & $\Sgemv$
  & $\flagT;\ r,n$
  & $q_r\gets Up$.\\[4pt]

9 & $\Saxpy$
  & Length $n$; multiplier $-1$
  & $q_r\gets q_r-\bOne$.\\[4pt]

10 & $\Saxpy$ (conditional)
   & Length $n$; multiplier $\beta/n$
   & Update the stored dual buffer by
     $(\beta/n)q_r$, equivalent to
     $y\gets y+(\beta/\sqrt n)q_r$.\\[4pt]

11 & $\SnrmTwo$
   & Length $n$
   & Return $\norm{q_r}_2$ for the feasibility
     diagnostic.\\
\bottomrule
\end{tabular}
\end{table}

\subsection{CUTLASS configuration and epilogue construction}
\label{app:cutlass}

The second matrix product, $\hat X\Theta$, uses a CUTLASS GEMM with row-major operands and problem shape $(n,r,d)$. The kernel is instantiated through \texttt{DefaultGemmWithVisitor} and exposed through \texttt{GemmUniversal\allowbreak{}Adapter}. It uses the \texttt{Sm80} architecture tag with \texttt{OpClassTensorOp}, threadblock tile $(128,64,16)$, warp tile $(64,32,16)$, instruction shape $(16,8,8)$, and three mainloop stages. Storage and accumulation are FP32; the selected Tensor Core multiplication uses TF32 internally. The operand alignment is one element, and the epilogue uses one stage.

\paragraph{Epilogue visitor tree.}
The epilogue is constructed recursively using \texttt{Sm80EVT<Node, Children...>}. Each parent receives fragments from its children and applies its operation. \zcref{fig:epilogue-tree} shows the resulting composition. Let $a_{ij}=(\hat X\Theta)_{ij}$ denote the GEMM accumulator.

\begin{figure}[!htbp]
\centering
\begin{forest}
for tree={
    grow'=0,
    draw,
    rounded corners,
    font=\scriptsize,
    align=center,
    parent anchor=east,
    child anchor=west,
    anchor=west,
    l sep=8pt,
    s sep=6pt,
    inner sep=3pt,
    calign=midpoint,
    tier/.option=level,
    edge={<-}
}
[{\texttt{UStore}\\$U_{ij}\gets w$}
    [{\texttt{VPlusStats}\\FP64 sums: $w^2,u^2,wu$}
        [{Maximum\\$w=\max\{v,0\}$}
            [{Add}
                [{Add}
                    [{Add}
                        [{\texttt{ULoad}\\$u=U_{ij}$}]
                        [{Multiply}
                            [{Accumulator\\$a_{ij}$}]
                            [{Scalar\\$2\alpha$}]
                        ]
                    ]
                    [{Multiply}
                        [{Multiply}
                            [{Column broadcast\\$\bar y_i$}]
                            [{Row broadcast\\$p_j$}]
                        ]
                        [{Scalar\\$-\alpha$}]
                    ]
                ]
                [{Multiply}
                    [{Row broadcast\\$s_j$}]
                    [{Scalar\\$-\alpha$}]
                ]
            ]
            [{Scalar\\$0$}]
        ]
        [{\texttt{ULoad}\\$u=U_{ij}$}]
    ]
]
\end{forest}
\caption{CUTLASS epilogue visitor tree. Arrows indicate fragment flow from children to parents,
from the leaves on the right toward the store on the left. Arithmetic nodes form the nonnegative update, the custom visitor accumulates statistics, and the root stores the updated factor entry.}
\label{fig:epilogue-tree}
\end{figure}

\paragraph{Loads and arithmetic.}
The accumulator leaf supplies $a_{ij}$, and the auxiliary loads supply the old factor entry $U_{ij}$. Row broadcasts provide $p_j$ and $s_j$, while the column broadcast provides $\bar y_i$. Together with the scalar coefficients, these inputs evaluate the entrywise update in (\ref{eq:entrywise-fusion}). All nodes operate within the same GEMM kernel.

\paragraph{Statistics and storage.}
The custom statistics node receives the unnormalized update $w$ and the old entry $u$ before $U_{ij}$ is overwritten. It accumulates $w^2$, $u^2$, and $wu$ in FP64 over valid output entries and passes $w$ to the store. The two \texttt{ULoad} leaves represent two uses of the old factor, not separate factor buffers. Thread local sums are reduced to one partial per statistic per output tile. Subseqeunt global reductions and scaling are performed in separate calls outside the tree.

\section{Additional Computation Details}

\paragraph{Initialization.} The original NLR algorithm initializes $U_0=\Proj_\Omega(Z)$ with i.i.d.~$N(0,1)$ entries in $Z$. To incorporate data structure, we instead use
\[V=XX^\top Z,\qquad V=QR,\qquad U_0=\Proj_\Omega(Q).\] The first two steps apply the randomized range finder of \citet{halko2011} to the Gram matrix $XX^\top$. Multiplication by $XX^\top$ emphasizes its leading eigen-directions,
while QR decomposition produces an orthonormal basis for the sampled range. Although the subsequent projection need not preserve this range, the construction provides a data-informed initialization. Empirically, orthonormalizing before projection substantially reduces the initial augmented-Lagrangian value.

For efficient GPU computation, we implement the orthonormalization using Cholesky QR in the smaller feature space. Writing $M=X^\top Z$ and $C=X^\top X$, we have \[V=XM,\qquad V^\top V=M^\top C M=R^\top R,\qquad Q=X(MR^{-1}),\] where $R$ is the upper-triangular Cholesky factor and $MR^{-1}$ is computed by a triangular solve. This formulation replaces QR of an $n\times r$ matrix with an $r\times r$ Cholesky factorization and a small triangular solve, while the large operations remain GPU-efficient matrix multiplications. Moreover, $C$ can be reused from data normalization. That said, Cholesky QR requires a sufficiently well-conditioned, full-rank sample matrix; otherwise, a more stable orthogonalization is needed.

\paragraph{Recover cluster labels from $U$ matrix.} Under exact recovery, the optimal clustering matrix $U^\ast(U^\ast)^\top$ has the block structure of the true partition, and the rows of any nonnegative factor $U^\ast$ are constant within each cluster, see
\citet[Theorem~2]{zhuang2024statistically} and
\citet[Lemma~1]{xu2025scalablesecondorderriemannianoptimization}.
For an approximate solution $U$, we apply SVD and take its leading $K$ left singular
vectors as a spectral embedding, and then apply $K$-means to their rows to obtain cluster labels. Although this SVD becomes costly for large $U$, its runtime is small relative to the main optimization loop in our experiments.

\paragraph{Hardware information.}
All experiments were conducted on NVIDIA H200 GPU nodes equipped with Intel Xeon Platinum 8558 processors and 32 GB of RAM. Additionally, some of the code were also tested on NVIDIA A100 and GeForce RTX 5090 GPUs.

\end{document}